\documentclass{article} %

\usepackage{iclr2026_preprint,times}  %

\usepackage{blindtext}

\usepackage{amsmath,amsfonts,bm}

\def\eqref#1{equation~\ref{#1}}

\def\1{\bm{1}}

\DeclareMathAlphabet{\mathsfit}{\encodingdefault}{\sfdefault}{m}{sl}
\SetMathAlphabet{\mathsfit}{bold}{\encodingdefault}{\sfdefault}{bx}{n}

\PassOptionsToPackage{numbers, compress}{natbib}

\usepackage{wrapfig}

\usepackage{blindtext}

\usepackage{multirow} 
\usepackage[utf8]{inputenc} %
\usepackage[T1]{fontenc}    %

\usepackage[table]{xcolor}
\usepackage[colorlinks=true,citecolor=teal,linkcolor=purple,urlcolor=purple]{hyperref}

\usepackage{url}            %
\usepackage{booktabs}       %
\usepackage{amsfonts}       %
\usepackage{amsthm}         %
\usepackage{nicefrac}       %
\usepackage{microtype}      %
\usepackage{xcolor}         %
\usepackage{graphicx} %
\usepackage{amsfonts}
\usepackage{amsmath}
\usepackage{amssymb}
\usepackage{xfrac}
\usepackage{mdframed}
\definecolor{myblue}{RGB}{218, 232, 252}

\usepackage{geometry}  %
\usepackage{mathtools}  %
\usepackage{listings}
\usepackage{xcolor}
\usepackage{textcomp} %
\usepackage{cleveref}
\usepackage{pifont}
\definecolor{codegreen}{rgb}{0,0.6,0}
\definecolor{codegray}{rgb}{0.5,0.5,0.5}
\definecolor{codepurple}{rgb}{0.58,0,0.82}
\definecolor{backcolour}{rgb}{1.0,1.0,0.85} %

\lstdefinestyle{mystyle}{
    backgroundcolor=\color{backcolour},
    commentstyle=\color{codegreen},
    keywordstyle=\color{blue}, %
    numberstyle=\tiny\color{codegray},
    stringstyle=\color{codepurple},
    basicstyle=\footnotesize\ttfamily,
    breakatwhitespace=false,
    breaklines=true,
    captionpos=b,
    keepspaces=true,
    numbers=none, %
    numbersep=5pt,
    showspaces=false,
    showstringspaces=false,
    showtabs=false,
    tabsize=2,
    language=Python,
    morekeywords={*,...} %
}

\lstset{style=mystyle} %

\newtheorem{theorem}{Theorem}
\newtheorem{example}[theorem]{Example}

\renewcommand{\vec}[1]{\mathbf{#1}}

\newcommand{\grad}[1]{\nabla_{\vec{#1}}}

\newcommand{\cmark}{\ding{51}}%
\newcommand{\xmark}{\ding{55}}%

\frenchspacing   

\creflabelformat{equation}{#2#1#3}

\newcommand{\thetitle}{Matrix-free least squares solvers: \\ Values, gradients, and what to do with them}

\usepackage{makecell}

\usepackage{hyperref}
\usepackage{url}

\title{\thetitle{}}
\author{Hrittik~Roy, Søren~Hauberg, Nicholas~Krämer\\
Technical University of Denmark \\
\texttt{\{hroy,sohau,pekra\}@dtu.dk}}

\iclrfinalcopy %
\begin{document}
\maketitle

\begin{abstract}
    This paper argues that the method of least squares has significant unfulfilled potential in modern machine learning, far beyond merely being a tool for fitting linear models.
    To release its potential, we derive custom gradients that transform the solver into a differentiable operator, like a neural network layer, enabling many diverse applications.
    Empirically, we demonstrate: (i) scalability by enforcing weight sparsity on a 50 million parameter model; (ii) imposing conservativeness constraints in score-based generative models; and (iii) hyperparameter tuning of Gaussian processes based on predictive performance. By doing this, our work represents the next iteration in developing differentiable linear-algebra tools and making them widely accessible to machine learning practitioners.
\end{abstract}

\section{Introduction}
The method of least squares is commonly introduced as one of the first steps into machine learning, where it is taught as a simple approach for performing linear regression \citep{10.5555/1162264}. Although its importance is recognized, it is frequently perceived as a basic tool, perhaps overshadowed by the complex non-linear models prevalent today. We argue that this misconception is a lost opportunity, and take steps towards rectifying it by providing novel tools and use cases in the present work.%

For least-squares problems like $\mathbf{x}^\star \coloneqq \arg\min_\mathbf{x}\|\mathbf{A}\mathbf{x}-\mathbf{b}\|^2 + \lambda^2 \|\vec{x}\|^2$ or $\vec{x}^\star \coloneqq \arg\min_\vec{x}\|\vec{x}\|^2 ~\text{s.t.}~\vec{A}\vec{x} = \vec{b}$, consider a computational abstraction $\texttt{LstSq}$,  which takes the linear operator $\vec{A}$, vector $\vec{b}$, and a  regularization weight $\lambda$ (which could be zero), and returns the least-squares solution,
\begin{align}\label{equation-least-squares-operator}
    \mathbf{x}^\star = \texttt{LstSq}(\vec{A}, \mathbf{b}, \lambda).
\end{align}
A central message of our work is that $\texttt{\upshape{LstSq}}$ is not merely a solver but -- if equipped with appropriate reverse-mode derivatives -- a differentiable operator like a neural network layer, and that it should be used as such.
One central example will be constrained optimization of a neural network (more on this later), but the method has many applications beyond this use case.
More precisely, our main contributions are the following: 
\begin{enumerate}
    \item 
We provide an efficient JAX implementation of $\texttt{LstSq}$. The least-squares solutions are computed via LSMR \citep{fong2011lsmr}, which we demonstrate is superior to alternative approaches such as solving normal equations or direct methods that instantiate the matrix. 
    \item 
 We derive and implement custom reverse-differentiation rules for adaptive least-squares solvers using the adjoint method. This implementation\footnote{JAX library: \url{https://github.com/pnkraemer/matfree}} has the advantage of working for all least-squares solvers, including adaptive solvers whose iteration count is unknown a priori. Our experiments show how the custom gradients are orders of magnitude faster than unrolling the solver's forward pass.
    \item 
 We revive the null-space method \citep{yamashita1980differential} for constrained optimization and reformulate it as a least-squares problem.
This new perspective enables using our efficient implementation of $\texttt{LstSq}$. To the best of our knowledge, ours is the first use of the null-space method in deep learning. 
\item We demonstrate the least-squares-null-space method's efficacy on a diverse set of constrained optimization tasks, such as: equivariance, weight sparsity, and conservativeness of score-based generative models, on \emph{models with up to 50 million parameters}.  We also provide a JAX library that seamlessly integrates the null-space method into Optax \citep{deepmind2020jax}, allowing constrained optimization of neural networks with just a few lines of code.\footnote{JAX library: \url{https://github.com/h-roy/nuox}}
    \item  
 We use the backward pass of the differentiable \texttt{LstSq} solver to calibrate a Gaussian process, directly optimizing the posterior fit instead of marginal-likelihood optimization. 
To achieve this, we exploit the natural connection between Gaussian process regression and least squares problems, and find that targeting the posterior fit improves both runtime and quality of fit over typical baselines.
\end{enumerate}

\section{Least squares: values, gradients, and what to do with them}
\label{sec: lstsq_bwd}

\newcommand{\AdjointVJPTheme}{Automatic differentiation versus a custom vector-Jacobian product (VJP)}

\subsection{Values: matrix-free least-squares in JAX}
\label{section:values}

For any $m, n \in \mathbb{N}$, let $\vec{b} \in \mathbb{R}^m$ and $\vec{A} \in \mathbb{R}^{m \times n}$ be given. 
Throughout this work, we assume that the matrix $\vec{A}$ has full rank, and is parameterized by some $\theta$, $\vec{A} = \vec{A}(\theta)$.
We distinguish least-squares problems with a tall $\vec{A}(\theta)$, which means $m \geq n$, from problems with a wide matrix, $m \leq n$. We also distinguish regularized from unregularized problems, depending on whether $\lambda$ is zero or not.
Intuitively, least squares problems seek optimal solutions $\vec{x}$ to linear systems $\vec{Ax=b}$, with a possibly-nonsquare matrix $\vec{A} \in \mathbb{R}^{m \times n}$ \citep{bjorck2024numerical}. 
More precisely, the least-squares method solves
\begin{align}
\label{eq:constr_lsq_param}
    \vec{x}^\star(\vec{\theta}, \vec{b}, \lambda) = \texttt{LstSq}(\vec{A}(\theta), \vec{b}, \lambda) = 
    \begin{cases}      \arg\min_{\vec{x}} \|\vec{A(\theta) x - b}\|^2  + \lambda^2 \|\vec{x}\|^2 & \text{if}~\lambda \neq 0 \text{ or $\vec{A}$ is tall},
    \\ 
    \arg\min_{\vec{x}}\left\{ \|\vec{x}\|^2 ~ \text{s.t.} ~ \vec{A}(\vec{\theta}) \vec{x} = \vec{b} \right\} & \text{if $\lambda = 0$\text{ and }$\vec{A}$ is wide.}
    \end{cases} 
\end{align}
For tall $\vec{A}(\theta)$, the regularized problem is well-defined for $\lambda \rightarrow 0$ (in the sense of admitting a unique solution in the limit).
For wide $\vec{A}(\theta)$, this is not the case. Instead, the relationship between the two is that the regularized problem ($\lambda \neq 0$) is solved by
\begin{align}\label{equation-least-squares-solution-regularized}
\vec{x}^\star(\theta, \vec{b}, \lambda) 
= \vec{A}(\theta)^\top (\vec{A}(\theta) \vec{A}(\theta)^\top + \lambda^2 \mathbb{I})^{-1} \vec{b}
= (\vec{A}(\theta)^\top \vec{A}(\theta)  + \lambda^2 \mathbb{I})^{-1}\vec{A}(\theta)^\top \vec{b}.
\end{align}
In the limit of $\lambda \rightarrow 0$, only one of the two parameterizations in \Cref{equation-least-squares-solution-regularized} is well-defined, depending on whether $\vec{A}(\theta)$ is tall or wide. The corresponding limit of $\vec{x}^\star$ for $\lambda \rightarrow 0$ minimizes $\|\vec{A} \vec{x} - \vec{b}\|^2$ if $\vec{A}(\theta)$ is tall, or $\{\|\vec{x}\|^2\text{ s.t. }\vec{A}(\theta) \vec{x} = \vec{b}\}$ if $\vec{A}(\theta)$ is wide, respectively; more on this relationship in  \Cref{sec:derivation-lstsq-adjoints}.

\begin{wraptable}[12]{r}{0.55\textwidth}
\vskip-1.5em
\small
\caption{\textbf{Approaches to solving least squares problems.}  ``L.S.'': linear system; ``O.T.'': orthogonal transformation.}
    \begin{center}
    \begin{tabular}{llcc}
        \toprule
        \textbf{Method}& \textbf{Property} & \textbf{Direct} & \textbf{Matrix-free} \\
        \midrule
        \multirow{3}{*}{ \textbf{L.S.}} 
        & Examples & Cholesky & Conj. Grad. \\
        & Precision & Double  & Double \\
        & Memory       & $\mathcal{O}(\min(m,n)^2)$ &$ \mathcal{O}(\max\{m, n\})$ \\
        \midrule
        \multirow{3}{*}{ \textbf{O.T.}} 
        & Examples & SVD, QR &  LSMR \\
        & Precision & {Single} & Single \\
        & Memory       & $\mathcal{O}(mn)$ & $ \mathcal{O}(\max\{m, n\})$ \\
        \bottomrule
    \end{tabular}
    \label{tab:summary_of_methods}
    \end{center}
\end{wraptable}

The various methods for numerically solving least squares problems can be broadly classified across two axes.
Along one axis, there are \emph{direct} versus \emph{matrix-free} methods, which differ in whether or not the matrix $\vec{A}(\theta)$ is instantiated in memory (``direct method'', high memory demands, and for realistic problems, $\vec{A}(\theta)$ is too big to store in memory) or only accessed via matrix-vector products (``matrix-free methods'', low memory demands).
Along the other axis, there are approaches based on solving the linear system in \Cref{equation-least-squares-solution-regularized} versus those approaches that apply orthogonal transformations to $\vec{A}(\theta)$ and extract $\vec{x}^\star$ directly, for example, using Golub-Kahan-Li bidiagonalization \citep{golub1965calculating}. 
Methods that solve linear systems require twice the precision of methods that orthogonally transform $\vec{A}(\theta)$. This is because the solvers compute $\vec{v} \mapsto (\vec{A}(\theta)^\top \vec{A} +\lambda^2 \mathbb{I})^{-1} \vec{v}$, which means that singular values of $\vec{A}(\theta)$ are squared, affecting the conditioning of the problem accordingly. 
\Cref{tab:summary_of_methods} summarises the differences in approaches.
Only matrix-free methods that target orthogonal transformations combine low memory demands with the ability to work in low precision, which is why we focus on this class of solvers in the present work.
As an instance of such an algorithm, our implementation uses LSMR  \citep{fong2011lsmr}.
LSMR is equivalent to applying MINRES \citep{doi:10.1137/0712047} to the linear system in \Cref{equation-least-squares-solution-regularized}, but more robust because it handles matrix-vector and vector-matrix products with $\vec{A}(\theta)$ separately, not in combination (thus it avoids ``squaring''; \Cref{sec:lstsq_fwd}).

Commonly, for example, in SciPy's implementation \citep{2020SciPy-NMeth}, LSMR requires access to both $\vec{v} \mapsto \vec{A}(\theta) \vec{v}$ and $\vec{u} \mapsto \vec{A}(\theta)^\top \vec{u}$.
Transposing the linear operator $\vec{A}(\theta)$ manually, without instantiating $\vec{A}(\theta)$, is often tedious and a common source of error.
To avoid this error source, our framework handles transposition automatically:
Accessing only $\vec{v} \mapsto \vec{A}(\theta)\vec{v}$, the transposed linear operator emerges from automatic differentiation.
A vector-Jacobian product (for instance, using \texttt{jax.vjp} or \texttt{torch.func.vjp}),
\begin{align}
    \left[\vec{A}(\theta)\vec{v}_0, \left( \vec{u} \mapsto \vec{A}(\theta)^\top \vec{u}\right) \right]
    =
    \texttt{vjp}\left(\vec{v} \mapsto \vec{A}(\theta) \vec{v}, \vec{v}_0\right) 
\end{align}
yields both the value $\vec{A}(\theta)\vec{v}$ and a function that implements matrix-vector products with $\vec{A}(\theta)^\top$.
At every step of LSMR's forward pass, we call this vector-Jacobian product and thereby transpose $\vec{A}(\theta)$ without exposing the possibility of erroneous implementations of transpose linear operators. By reducing the likelihood of errors, the solvers become more practical, which is important for using numerical least squares in modern machine-learning toolchains.

\subsection{Gradients: Adjoint of the least-squares problem }

Assume that \texttt{LstSq} accesses the matrix $\vec{A}$ only through parameterized matrix-vector products, which means $(\vec{\theta}, \vec{v}) \mapsto \vec{A}(\vec{\theta})\vec{v}$. 
If the solution $\vec{x}^\star = \vec{x}^\star(\theta, \vec{b}, \lambda)$ of the least-squares problem (\Cref{equation-least-squares-solution-regularized}) is then passed to a downstream loss function $\mu = \mu(\vec{x}^\star)$, we need a backward pass (think, ``gradient'') through \texttt{LstSq} to optimize $\mu$ with respect to $\theta$, $\vec{b}$, or $\lambda$.
We never optimize $\mu$ with respect to $\vec{A}$, only with respect to $\theta$, because if $\vec{A}$ is too big to store in memory, $\nabla_\vec{A} \mu$ would be as well.  
The central challenge tackled next is the computation of the gradients of this overall loss $\mu$ with respect to the underlying parameters $\vec{\theta}$, $\vec{b}$, and $\lambda$ -- that is, computing $\nabla_{\vec{\theta}}\mu$, $\nabla_{\vec{b}}\mu$, and $\nabla_\lambda \mu$ from $\nabla_\vec{x} \mu$. 
These gradients then enable end-to-end differentiation of computational pipelines featuring least-squares problems.
The following theorem states how to implement this backward pass, and is an essential contribution of this work.

\begin{theorem}[Gradients of \texttt{LstSq}]
\label{theorem:lsq_gradients}
Let $\vec{A}(\theta)$ be a full-rank matrix, dependent on parameters $\theta$, and accessed through matrix-vector products $(\theta, \vec{v}) \mapsto \vec{A}(\theta)\vec{v}$. 
Let $\vec{b}$ be a known vector, and let $\lambda \in \mathbb{R}$ be a known regularization weight. Let $\mu$ be a scalar objective function that depends on the solution of a least-squares problem involving $\vec{A}(\theta)$, $\vec{b}$, and $\lambda$. Then, the following two statements hold for any $\lambda \in \mathbb{R}$:
\begin{enumerate}
\item 
Suppose $\vec{x}^\star$ solves the least-squares problem in \Cref{eq:constr_lsq_param} 
with a \textbf{tall matrix} $\vec{A}(\theta)$, then, we have
\begin{align}
\grad{\theta} \mu = \grad{\theta} g(\theta),
\quad
\grad{b}\mu = \texttt{LstSq}(\vec{A}(\theta)^\top,\grad{\vec{x}}\mu, \lambda),
\quad
\grad{\lambda} \mu = 2 \lambda \langle \xi, \vec{x}\rangle,
\end{align}
with $g(\theta) \coloneqq  \langle \vec{r}, \vec{A}(\theta) \xi \rangle + \langle \nabla_\vec{b} \mu, \vec{A}(\theta) \vec{x}^\star \rangle$,
$\vec{r} \coloneqq \vec{A}(\theta) \vec{x}^\star - \vec{b}$, and $\xi \coloneqq \texttt{LstSq}(\vec{A}(\theta),\grad{\vec{b}}\mu, 0)$.
\item 
Suppose $\vec{x}^\star$ solves the least-squares problem  in \Cref{eq:constr_lsq_param} with a \textbf{wide matrix} $\vec{A}(\theta)$, then, we have
\begin{align}
\grad{\theta} \mu = \grad{\theta}g(\theta), 
\quad
\grad{b}\mu = \texttt{LstSq}(\vec{A}(\theta)^\top, \grad{\vec{x}}\mu, \lambda),
\quad 
\grad{\lambda} \mu = -2 \lambda \langle \grad{b}\mu, \vec{y} \rangle, 
\end{align}
with $g(\theta) \coloneqq  \langle \nabla_\vec{b} \mu, \vec{A}(\theta) \vec{x} \rangle + \langle \vec{y}, \vec{A}(\theta) \vec{r} \rangle$,
$\vec{r} \coloneqq \vec{A}(\theta)^\top \grad{\vec{b}} \mu  - \grad{\vec{x}}\mu$ and  $\vec{y} \coloneqq \texttt{LstSq}(\vec{A}(\theta)^\top, \vec{x}, 0)$.
\end{enumerate}
\end{theorem}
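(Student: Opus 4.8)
The plan is to treat the least-squares solution as an implicitly defined function of $(\theta,\vec{b},\lambda)$ through the closed forms in \Cref{equation-least-squares-solution-regularized} and to apply the adjoint (implicit-function) method, so that the backward pass never touches the iteration history of LSMR. Concretely, for tall $\vec{A}(\theta)$ one uses $\vec{x}^\star = (\vec{A}(\theta)^\top\vec{A}(\theta)+\lambda^2\mathbb{I})^{-1}\vec{A}(\theta)^\top\vec{b}$, equivalently the first-order condition $\vec{A}(\theta)^\top\vec{r}+\lambda^2\vec{x}^\star=\vec{0}$ with $\vec{r}\coloneqq\vec{A}(\theta)\vec{x}^\star-\vec{b}$; for wide $\vec{A}(\theta)$ one uses $\vec{x}^\star=\vec{A}(\theta)^\top(\vec{A}(\theta)\vec{A}(\theta)^\top+\lambda^2\mathbb{I})^{-1}\vec{b}$, which for $\lambda=0$ is the KKT solution of $\min\{\|\vec{x}\|^2: \vec{A}(\theta)\vec{x}=\vec{b}\}$. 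Since $\vec{A}(\theta)$ has full rank, the inverses appearing here are well-defined for every $\lambda\in\mathbb{R}$ in the corresponding shape regime, so the statement ``for any $\lambda$'' needs no separate limiting argument. Differentiating these relations in $\theta,\vec{b},\lambda$ (using $dM^{-1}=-M^{-1}(dM)M^{-1}$) yields a linear identity $\mathcal{S}\,d\vec{x}^\star = (\text{explicit terms built from }(\partial_\theta\vec{A})\vec{v},\ d\vec{b},\ d\lambda)$, with symmetric invertible sensitivity operator $\mathcal{S}=\vec{A}(\theta)^\top\vec{A}(\theta)+\lambda^2\mathbb{I}$ in the tall case or its $\vec{A}(\theta)\vec{A}(\theta)^\top$-based counterpart in the wide case.

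The central step is the adjoint trick: given the upstream cotangent $\nabla_{\vec{x}}\mu$, write $d\mu=\langle\nabla_{\vec{x}}\mu,\,d\vec{x}^\star\rangle=\langle\mathcal{S}^{-1}\nabla_{\vec{x}}\mu,\,(\text{explicit terms})\rangle$, so the whole VJP reduces to a single adjoint solve against $\mathcal{S}$. The reason the theorem phrases everything through $\texttt{LstSq}$ on the transposed operator, rather than through $\mathcal{S}^{-1}$, is the pair of identities in \Cref{equation-least-squares-solution-regularized}: $\texttt{LstSq}(\vec{A}(\theta)^\top,\nabla_{\vec{x}}\mu,\lambda)$ equals $\vec{A}(\theta)\,\mathcal{S}^{-1}\nabla_{\vec{x}}\mu$ for tall $\vec{A}(\theta)$ and $(\vec{A}(\theta)\vec{A}(\theta)^\top+\lambda^2\mathbb{I})^{-1}\vec{A}(\theta)\nabla_{\vec{x}}\mu$ for wide $\vec{A}(\theta)$ — in both cases exactly $\nabla_{\vec{b}}\mu$. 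Composing a second, unregularized solve $\texttt{LstSq}(\vec{A}(\theta),\cdot,0)$ strips the leftover $\vec{A}(\theta)$ via $(\vec{A}(\theta)^\top\vec{A}(\theta))^{-1}\vec{A}(\theta)^\top\vec{A}(\theta)=\mathbb{I}$ and recovers $\mathcal{S}^{-1}\nabla_{\vec{x}}\mu$ itself — the auxiliary vector $\xi$ in the tall case; the analogous construction in the wide case yields $\vec{y}=\texttt{LstSq}(\vec{A}(\theta)^\top,\vec{x}^\star,0)$, which one checks equals the dual variable $(\vec{A}(\theta)\vec{A}(\theta)^\top+\lambda^2\mathbb{I})^{-1}\vec{b}$, together with $\vec{r}=\vec{A}(\theta)^\top\nabla_{\vec{b}}\mu-\nabla_{\vec{x}}\mu$ as the residual of the adjoint least-squares problem. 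This is exactly what keeps the backward pass matrix-free and avoids squaring $\vec{A}(\theta)$.

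With the adjoint vector computed, the three gradients are read off by collecting coefficients, with signs fixed by the differentiation. Because $\lambda$ enters only through $\lambda^2\mathbb{I}$, the $d\lambda$-coefficient is $2\lambda$ times an inner product of the adjoint vector with $\vec{x}^\star$, giving $\nabla_\lambda\mu$ (the overall sign coming straight from differentiating $\lambda^2\mathbb{I}$, and easily checked on a $1\times1$ example). The $d\vec{b}$-coefficient is the adjoint solve, i.e. $\nabla_{\vec{b}}\mu=\texttt{LstSq}(\vec{A}(\theta)^\top,\nabla_{\vec{x}}\mu,\lambda)$. The $d\theta$-coefficient is a sum of bilinear forms $\langle\vec{u},(\partial_\theta\vec{A})\vec{v}\rangle$ in which each $\vec{u},\vec{v}$ is one of the already-computed, $\theta$-frozen vectors ($\vec{r},\xi,\nabla_{\vec{b}}\mu,\vec{x}^\star$ in the tall case; $\nabla_{\vec{b}}\mu,\vec{y},\vec{r},\vec{x}^\star$ in the wide case), and this sum is precisely $\nabla_\theta g(\theta)$ for the scalar $g$ in the statement, which is a composition of the sole available primitive $(\theta,\vec{v})\mapsto\vec{A}(\theta)\vec{v}$ with constant arguments. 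Hence feeding $g$ to an off-the-shelf reverse-mode AD routine returns $\nabla_\theta\mu$ correctly and without instantiating $\vec{A}(\theta)$. The wide case runs the identical program on the wide closed form (resp.\ KKT conditions), with the roles of $\vec{A}(\theta)$ and $\vec{A}(\theta)^\top$ essentially interchanged, producing the stated $\vec{r}$, $\vec{y}$, and $g$.

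The step I expect to be the main obstacle is the $\texttt{LstSq}$ identification in the second paragraph: verifying that each adjoint linear solve genuinely coincides with a concrete $\texttt{LstSq}$ call on $\vec{A}(\theta)^\top$ with the right regularization, and that the composition of the transposed and untransposed solves reproduces $\mathcal{S}^{-1}$ on the relevant subspace — in the wide $\lambda=0$ case this leans on $\vec{x}^\star\in\mathrm{range}(\vec{A}(\theta)^\top)$, and in the wide $\lambda\neq0$ case on the algebraic identity $(\vec{A}\vec{A}^\top)^{-1}\bigl(\vec{b}-\lambda^2(\vec{A}\vec{A}^\top+\lambda^2\mathbb{I})^{-1}\vec{b}\bigr)=(\vec{A}\vec{A}^\top+\lambda^2\mathbb{I})^{-1}\vec{b}$ that turns $\vec{y}$ into the dual variable. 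A secondary point requiring care is the legitimacy of the ``freeze the auxiliary vectors, then differentiate $g$'' move — that the bilinear-in-$\partial_\theta\vec{A}$ expression from implicit differentiation is literally the total gradient of $g$ under that freezing, which is what reverse-mode AD computes. The remaining items — invertibility of $\mathcal{S}$ from full rank of $\vec{A}(\theta)$, and transposing $(\partial_\theta\vec{A})$ inside the inner products — are routine.
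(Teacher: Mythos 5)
Your proposal is correct and follows essentially the same route as the paper's own proof: the adjoint method applied to the normal-equation constraint $(\vec{A}^\top\vec{A}+\lambda^2\mathbb{I})\vec{x}=\vec{A}^\top\vec{b}$ in the tall case and to the pair $\vec{x}=\vec{A}^\top\vec{y}$, $(\vec{A}\vec{A}^\top+\lambda^2\mathbb{I})\vec{y}=\vec{b}$ in the wide case, with the same identification of the adjoint solves as \texttt{LstSq} calls on the transposed operator and the same ``freeze the auxiliary vectors and differentiate the scalar $g$'' device for $\nabla_\theta\mu$. The points you flag as potential obstacles (the \texttt{LstSq} identifications and the legitimacy of the frozen-$g$ step) are handled in the paper exactly as you anticipate, so no gap remains.
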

\begin{proof}
The essential strategy for deriving these gradient expressions is to use the method of adjoints.
An introduction to the latter is in \Cref{sec:four-step-program}. 
A complete proof of the theorem can be found in \Cref{sec:derivation-lstsq-adjoints}.
\end{proof}

\textbf{Related work on \Cref{theorem:lsq_gradients}:} 
To the best of our knowledge, \Cref{theorem:lsq_gradients} is new. However, similar-but-different statements have been made in prior work.
The results most closely related to \Cref{theorem:lsq_gradients} are those by  \citet{golub1973differentiation} and \citet{kramer2024gradients}. 
\citet{golub1973differentiation} derive forward-mode derivatives of pseudo-inverses, which are closely linked with least-squares solvers.
In contrast, \Cref{theorem:lsq_gradients} states the reverse-mode derivatives, and handles a regularisation term. 
A gradient with respect to this term will be needed in the experiment in \Cref{sec:gp_inducing_points}.
\citet{kramer2024gradients} derive efficient recursions for backward passes through Lanczos and Arnoldi methods, and use them to compute gradients of matrix functions. 
Conversely, our work derives backward passes through numerical least-squares solvers, albeit using similar proof techniques. 
Finally, \citet{amos2017optnet}'s work on implicit layers shares a high-level theme with our work, but the technical contributions (numerical least squares versus quadratic programs) and applications differ entirely.

\subsection{What to do with them: constrained optimization of neural networks}
\label{sec: constrained opt}
In the remainder of this paper, we turn to novel applications of least squares in deep learning. Specifically, we focus on constrained optimization of neural networks. 
Numerous desirable properties, such as physical principles, equivariance, or sparsity, can be incorporated through model constraints. 
Consider the problem
\begin{align}
    \label{eq:constr_opt_nn}
    \vec{\theta}^\star = \arg\min_{\vec{\theta} \in \mathbb{R}^d} \Big\{\mathbb{E}_{x\sim\mathcal{X}} \left[ \mathcal{L}(\vec{\theta}, x) \right] ~~ \text{s.t.} ~~ \vec{c}(\vec{\theta}) = \vec{0}\Big\},
\end{align}
where $\vec{\theta} \in \mathbb{R}^d$ represents the network parameters, $\mathcal{L}$ is the task loss, and $\vec{c}: \mathbb{R}^d \to \mathbb{R}^k$ defines $k \in \mathbb{N}$ constraint, which shall be continuously differentiable.
The loss and $\theta$ are unrelated to those in previous sections.
\Cref{table-structure-learning-examples} (Appendix) shows examples for constraints appearing in combination with neural networks.
Standard optimizers like Adam \citep{kingma2014adam} are ill-suited for solving \Cref{eq:constr_opt_nn}, since they operate exclusively in the unconstrained parameter space. 
However, the solution $\vec{\theta}^\star$ of \Cref{eq:constr_opt_nn} must satisfy the Karush-Kuhn-Tucker conditions \citep{nocedal1999numerical}: primal feasibility (satisfying the constraint) and Lagrangian stationarity. 
Finding a $\vec{\theta}^\star$ that simultaneously satisfies both conditions can be challenging, particularly for non-linear constraints in high-dimensional parameter spaces encountered in deep learning.

The \emph{null-space method}~\citep{yamashita1980differential} proposes an iterative algorithm that circumvents these issues. Instead of enforcing the constraint directly, \citet{yamashita1980differential} proposes to enforce its first-order approximation, which amounts to solving a sequence of local problems with linearized constraints; for some $\theta_t \in \mathbb{R}^d$, it enforces
\begin{align}
\vec{c}(\theta) \approx \vec{c}(\theta_t) + \vec{J}_\vec{c}(\vec{\theta}_t)(\vec{\theta} - \vec{\theta}_t) = \vec{0}.
\end{align} 
An algorithm is derived by studying a differential equation whose critical points are the solution to an equality-constrained optimization problem. Discretizing such a flow with learning rates $\eta, \gamma > 0$ yields
\begin{align} 
\theta_{t+1} = \theta_t - \eta \left(\mathbb{I} - \vec{J_{c}}(\theta_t)^{\top}(\vec{J_{c}}(\theta_t) \vec{J_{c}}(\theta_t)^{\top})^{-1}\vec{J_{c}}(\theta_t) \right)\nabla\mathcal{L}(\theta_{t}) +\gamma \vec{J_{c}}(\theta_t)^{\top}(\vec{J_{c}}(\theta_t)\vec{J_{c}}(\theta_t)^{\top})^{-1}\vec{c}(\theta_{t}).
\end{align}
We make the crucial observation that this update can be reformulated as a least-squares problem:
\begin{subequations}
\label{eq:local_least_squares}
\begin{align} \theta_{t+1} - \theta_t 
    &= -\arg \min_{\delta} \left\{\frac{1}{2}||\delta - \eta\nabla_{\theta}\mathcal{L}(\theta_t))||^{2} 
    ~
    \text{s.t.}~ \vec{J_{c}}(\theta_t)\delta=-\gamma\vec{c}(\theta_{t})\right\}
  \label{eq:linearized-constraint}\\
    &= -\eta \nabla_{\theta}\mathcal{L}(\theta_t) + \texttt{LstSq}\big(\vec{J}_\vec{c}(\theta_t), \eta\vec{J_c}(\theta_t)\nabla_{\theta}\mathcal{L}(\theta_t) - \gamma\vec{c}(\theta_{t}), 0\big).
\label{equation-affine-trafo-gradient}
\end{align}
\end{subequations}
\Cref{sec:weighted-least-squares} explains how to use \texttt{LstSq} for a least-squares problem with a bias term (in other words, how to transition from \Cref{eq:linearized-constraint} to \Cref{equation-affine-trafo-gradient}).
Crucially, \emph{the transformation of the gradient in \Cref{equation-affine-trafo-gradient} turns any unconstrained optimizer into one for the constrained problem in \Cref{eq:constr_opt_nn}.}
This is beneficial because state-of-the-art stochastic optimization routines can now be used for solving constrained optimization problems.
We exploit this generality of the null-space method in our code implementation: \Cref{fig:optax_integration} shows that with our library, a few lines of code can turn any of Optax's \citep{deepmind2020jax} gradient transformations into an algorithm for solving constrained optimization problems.
\begin{figure}[t]
\begin{center}
\small
\begin{minipage}{\linewidth}
\begin{lstlisting}[language=Python, basicstyle=\ttfamily, breaklines=true, frame=tb]
import optax
from nuox import linalg, nsm  # null-space method

def constraint(params):  # example constraint: enforce unit norm
    return jnp.dot(params, params) - 1.0

transform = nsm.projection(constraint, solver=linalg.lsmr())
optim = optax.chain(transform,  optax.adam(1e-3))  # Use any optax optimizer
\end{lstlisting}
\end{minipage}
\end{center}
\caption{Combine the null-space projection with a standard Optax optimizer using \href{https://github.com/h-roy/nuox}{\texttt{nuox}}.}
\label{fig:optax_integration}
\end{figure}

\citet{yamashita1980differential} shows that under appropriate assumptions, 
the null-space method converges to the solution of the constrained optimization problem and that it has a quadratic rate of convergence. To make such results accessible to a more general audience,
\Cref{section-convergence-null-space-method} \emph{provides a new proof} of convergence. 
As for a geometric interpretation, constrained optimization can be thought of as optimization on a manifold \citep{boumal2023introduction}. 
Using this perspective, null-space-method steps can then be derived as Riemannian gradient steps with the projection onto the tangent space as approximations of the exponential map.
\Cref{sec:geometric_interpretation} elaborates on this \textit{new interpretation of the null-space method using differential geometry}. 

\begin{wrapfigure}[17]{l}{0.48\textwidth}
\vskip-1.3em
 \includegraphics[width=\linewidth]{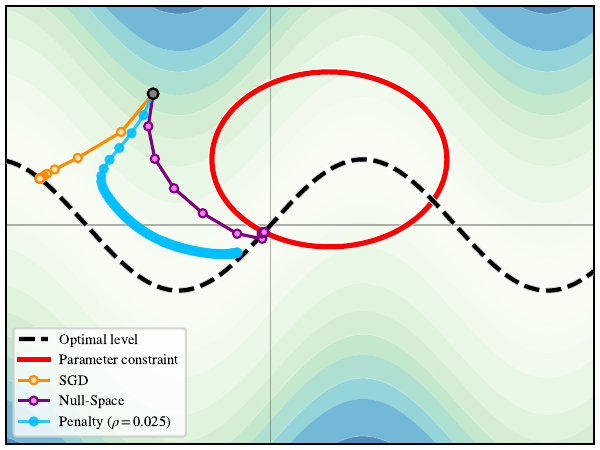}
\caption{SGD \& penalty method fail the constraint, unlike the null-space method.}
\label{figure:combined_optimizers_plot}
\end{wrapfigure}
Classical methods for constrained optimization include penalty methods, Lagrangian methods, and projected gradient descent \citep{nocedal1999numerical}. 
Penalty methods 
turn a constrained problem into an unconstrained one by adding a penalty term to the loss. However, for finite penalty weights, penalty methods do not satisfy primal feasibility upon convergence \citep{nocedal1999numerical}, and a large penalty weight can distort the optimization landscape, leading to poor solutions for the task loss.
\Cref{figure:combined_optimizers_plot} shows this issue.

Another approach to constrained optimization is to optimize an (augmented) Lagrangian.
However, this requires finding saddle points rather than minima, which complicates the use of typical gradient-based (stochastic) optimization routines, leading to intricate update schemes that can be difficult to tune \citep{alma991013887829703276}. 
Finally, projected gradient descent is an iterative method that alternates between a standard gradient update on the task loss and a projection step onto the feasible set defined by $\vec{c}(\vec{\theta})=\vec{0}$. While conceptually straightforward, the projection step is computationally expensive or analytically intractable for most constraints. Consequently, PGD is often limited to simple problems, and does not generalize to the applications we demonstrate in \Cref{sec:demonstrations}. 
\Cref{tab:summary_of_competitors} summarizes the relative strengths and weaknesses of different classical methods.

\clearpage 
\begin{wraptable}[9]{r}{0.5\textwidth}
    \vskip-1.em
    \caption{Key properties of various constrained optimization algorithms. ``NSM'':``Null-space method''. PGD: ``Projected gradient descent.''
    }
    \centering 
    \vskip0.3em
        \small
    \begin{tabular}{l c c c c}
     \toprule
     & NSM & Penalty & Lagr.  & PGD \\
     \midrule
     KKT &  \color{green!50!black} \cmark & \color{red} \xmark & \color{green!50!black} \cmark & \color{green!50!black} \cmark \\
     No saddle pts. & \color{green!50!black} \cmark & \color{green!50!black} \cmark & \color{red} \xmark & \color{green!50!black} \cmark \\
     Any constraint & \color{green!50!black} \cmark & \color{green!50!black} \cmark & \color{green!50!black} \cmark & \color{red} \xmark\\
     \bottomrule
    \end{tabular}
    \label{tab:summary_of_competitors}
\end{wraptable}
Recent works that tackle constrained optimization for neural networks have combined one of the aforementioned classical methods with certain approximations. \citet{gallego2022controlled} find a saddle point of the Lagrangian by doing gradient descent on the neural network parameters and gradient ascent on the Lagrange multiplier. \citet{donti2021dc3} propose a framework for constrained optimization, which is equivalent to an approximate projected gradient descent scheme. 
Our usage of the null-space method is the first of its kind in a deep learning setting, and generally novel in combination with numerical least squares. 

\section{Experiments}
\label{sec:demonstrations}

\subsection{Efficiency of custom gradients}

\begin{wrapfigure}[12]{r}{0.38\linewidth}
\vskip-5em
\includegraphics[width=0.99\linewidth]{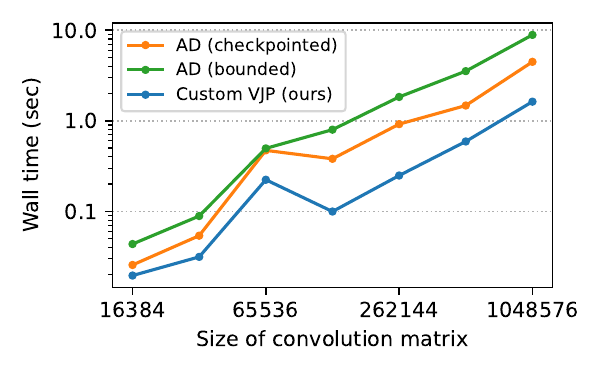}
\caption{\AdjointVJPTheme{}. Our custom VJP is five to ten times faster than unrolling the solver's loop.}
\label{figure:adjoint-vs-ad}
\end{wrapfigure}
Next, we compare the efficiency of our custom gradient (\Cref{theorem:lsq_gradients}) with automatic differentiation ``through'' an adaptive least-squares solver (LSMR), 
implemented via Equinox's reverse-mode differentiable while-loops \citep{kidger2021equinox}.
As a test problem, let $\mathbf{A}$ be a square convolution matrix with a fixed-size convolution kernel and an increasing number of rows and columns. 
The convolution kernel as well as the right-hand side vector $\vec{b}$ are randomly sampled from $\mathcal{N}(\vec{0}, \mathbb{I})$.
We measure the runtime (wall time), reporting the fastest of three runs to minimize ``machine noise'' as much as possible -- the results are in \Cref{figure:adjoint-vs-ad}. 
The runtimes of all three are proportional, but our custom backward pass is five to ten times faster than the alternatives.

\subsection{Gaussian process calibration via differentiable least-squares}
\label{sec:gp_inducing_points}

Next, we demonstrate the utility of reverse derivatives of adaptive least-squares codes.
Gaussian processes are a natural testbed for two reasons: first, they are closely linked to least-squares problems \citep{williams2006gaussian}; second, matrix-free linear algebra is popular for accelerating Gaussian process inference \citep{gardner2018gpytorch}.
Consider the task of interpolating a function $f_\text{true}(x) = \cos(2\pi x) + x\sin(5\pi x)$ from noisy observations.
We sample 1600 training data points $\vec{X}_\text{train}$, and 400 test data points $\vec{X}_\text{test}$ uniformly on $[0, 1]$, with corresponding noisy evaluations $\vec{y}_\text{train}$ and $\vec{y}_\text{test}$. The noise is discussed below.
For unknown lengthscale $\ell$, output-scale $\sigma$, and observation noise $\sigma$, consider the following probabilistic model.
Let $\omega \sim \mathcal{N}(\vec{0}, \mathbb{I}_k / \ell^2)$, and $b \sim \mathcal{U}([0, 2\pi])$ be fixed and define a Gaussian process with $k=200$ random Fourier features \citep{rahimi2007random},
\begin{align}\label{equation-gaussian-process-setup}
    \vec{z} \sim \mathcal{N}(\vec{0}, \mathbb{I}_{k}), 
    \quad 
    f(x) = \Phi_{\sigma, \ell}(x)\vec{z} = \left[ \sigma\cos(\omega_i^\top x + b)z_i\right]_{i=1}^k, 
    \quad 
    \vec{y} \mid \vec{z} \sim \mathcal{N}(f(\vec{X})\vec{z}, \lambda^2\mathbb{I})
\end{align}
where $(\vec{X}, \vec{y})$ represent the in- and outputs of either the training or the test set, respectively, depending on the stage of the experiment.
\Cref{equation-gaussian-process-setup} models isotropic Gaussian observation noise, but we generate the data with anisotropic noise (increasing as $x$ increases; see \Cref{fig:gaussian-process-result}).
This model mismatch emulates what is typically encountered when using Gaussian processes ``in the wild''.
Given the training data, the conditional mean $\vec{z}^\star \coloneqq \mathbb{E}[\vec{z} \mid \vec{y}_\text{train}]$ solves a least-squares problem,
\begin{align}  \label{equation-gp-lstsq-solution}
\vec{z}^\star(\sigma, \ell, \lambda)
= \arg\min_\vec{z} \left\{\|\Phi_{\sigma, \ell}(\mathbf{X}_\text{train})\vec{z} - \mathbf{y}_\text{train} \|^2 + \lambda^2\|\vec{z}\|^2 \right\} 
= \texttt{LstSq}\left(\Phi_{\sigma, \ell}(\mathbf{X}_\text{train}),  \mathbf{y}_\text{train}, \lambda \right).
\end{align} 
We compute the solution to this least-squares problem using LSMR, selecting the tolerance $10^{-5}$.
Then, we learn the hyperparameters $\ell$, $\sigma$, and $\lambda$ using two different algorithms:
\begin{enumerate}
\item \textbf{Baseline:} Type-II log-marginal-likelihood optimization on the training set \citep{williams2006gaussian}, which minimizes the negative log-probability-density function of the observations $\vec{y}_\text{train}$
\begin{align} 
L(\sigma, \ell, \lambda) \coloneqq - \log
p(\vec{y}_\text{train} \mid \sigma, \ell, \lambda) = -\log \mathcal{N}(\vec{y}_\text{train} \mid  0, \Phi_{\sigma, \ell}(\vec{X}_\text{train})\Phi_{\sigma, \ell}(\vec{X}_\text{train})^\top + \lambda^2 \mathbb{I})
\end{align}
to find the optimal hyperparameters.
Type-II marginal likelihood optimisation is the typical calibration strategy for Gaussian process models \citep{williams2006gaussian} and, thus, the baseline. 
\item \textbf{Ours:} Evaluating the fit of the predictive mean, which means that we first compute $\vec{z}^\star(\sigma, \ell, \lambda)$ according to \Cref{equation-gp-lstsq-solution}, and then evaluate 
\begin{align} 
L(\sigma, \ell, \lambda) \coloneqq \|\Phi_{\sigma, \ell}(\vec{X}_\text{train})\vec{z}^\star(\sigma, \ell, \lambda) - \vec{y}\|^2 + \lambda^2 \|\vec{z}^\star(\sigma, \ell, \lambda)\|.
\end{align}
This approach is surprisingly uncommon in the Gaussian process literature  -- we only know of \citet{nguyen2021dataset} who use it -- but beats marginal likelihood in simplicity and scalability (shown below). 
Evaluating the gradient of this calibration loss requires gradients of \texttt{LstSq}.
\end{enumerate}
Both losses are optimized with standard optimizers and learning rates. The results of this comparison for 10 different random seeds are in \Cref{fig:gaussian-process-result}.
\begin{figure}[t]
    \centering \includegraphics[width=0.99\linewidth]{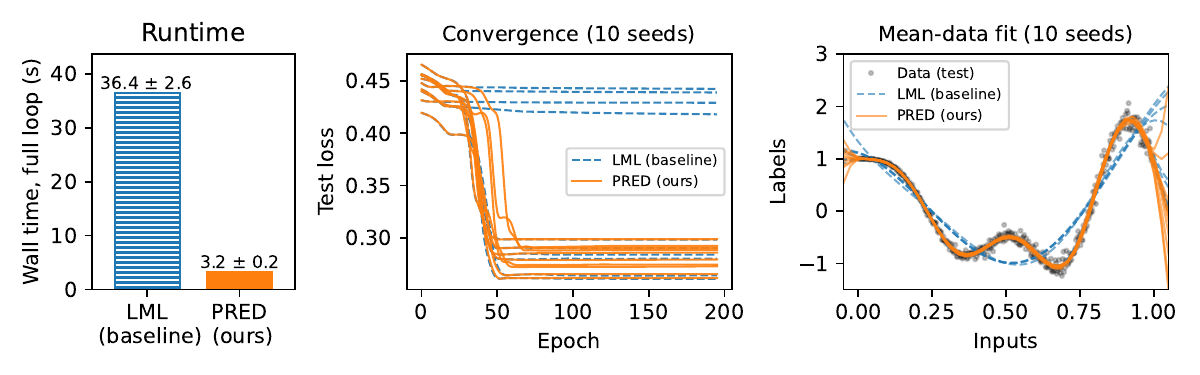}
    \caption{Calibration with negative log-marginal-likelihood (LML, baseline) versus evaluating the fit of the predictive mean (PRED, ours). PRED is over ten times faster (left), consistently achieves lower test losses (root mean square error on test set, $p$-value is $3.11\%$), which correlates with a better mean-data fit (right). }
    \label{fig:gaussian-process-result}
\end{figure}
They show how using our predictive-mean calibration loss is more than ten times faster (left plot), with lower test loss (root-mean-square error on test data, middle plot; the $p$-value is $3.11\%$, which suggests that the differences are significant), and a better visual fit (right).
The mean-data fit shows how the marginal-likelihood strategy leads to underfitting in four of the ten cases, whereas our loss consistently performs well.
In summary, the differentiable LSMR code enables highly efficient calibration of Gaussian process models. 

\subsection{Constrained optimization}
Next, we demonstrate the capabilities of the null-space method using various practical constraints. 
The focal points are, next to good performance, versatility, and ease of application; thus, the benchmarks below prefer baselines that are typical to each constraint over those that are carefully-tuned state-of-the-art implementations.
To make things fair, we use equally little fine-tuning for our approach -- the results are surprisingly strong. 
We anticipate that domain-specific optimizations could further enhance performance and scalability in each application. Precise setups for all experiments are in \Cref{sec:experimental-details}.

\textbf{Enforcing equivariance:~~}
The null-space method enables the enforcement of complex functional properties like equivariance and invariance directly during training, without the need for bespoke architectures. These properties act as powerful structural priors, guiding the model to learn representations that respect known symmetries or are robust to specific nuisance transformations in the input data.
\begin{itemize}
\item \textit{Rotation Equivariance:} We enforce $C_4$ rotational equivariance \citep{cohen2016group} on the convolutional layers of a LeNet \citep{726791} model trained on FMNIST \citep{xiao2017fashion}. The constraint $\vec{c}(\vec{\theta})$ minimizes the difference between final feature maps resulting from rotating the input image and rotating the final output feature maps, ensuring the learned filters respect $C_4$ rotational symmetry (i.e., $f(R\vec{x}; \vec{\theta}) - R f(\vec{x}; \vec{\theta}) = \vec{0}$ for $C_4$ rotation $R$). The results in \Cref{tab:equiv_invariance_results} show that the null-space method balances the task loss and constraint satisfaction: it achieves slightly lower test accuracy compared to ``ordinarily trained'' models but exhibits vastly superior satisfaction of $C_4$ equivariance.
\item 
\textit{$O(3)$ Equivariance:}
A key strength of our null-space method is its ability to impose complex group equivariances by changing just a few lines of code to define the constraint. We demonstrate this by enforcing $O(3)$ equivariance on the task of predicting the moment of inertia for particle systems \citep{finzi2021practical}. We randomly sample transformations $R\in O(3)$ and the constraint $f(R\vec{x}; \vec{\theta}) - R^\top f(\vec{x} ; \vec{\theta}) R = \vec{0}$.
Then, we benchmark our null-space method against a baseline that uses $O(3)$ data augmentation. Performance is evaluated on a test dataset by measuring the mean squared error (MSE) on a test set and the $O(3)$ equivariance constraint violation (\Cref{tab:equiv_invariance_results}). The null-space method outperforms data augmentation in both metrics.
\end{itemize}
\begin{table}[t]
\caption{Comparing the null-space method with baselines on $C_4$ and $O(3)$ equivariance. Test error (accuracy and mean-square error) and constraint violation. Lower is better.}
\vskip0.5em
\label{tab:equiv_invariance_results}
\resizebox{\textwidth}{!}{
\begin{tabular}{ l l l c c c }
\toprule
Structure & Task & Method & Test Error ($\downarrow$) & Constraint Violation ($\downarrow$) \\
\midrule
$C_4$ Equiv. & FMNIST & Baseline & $\mathbf{0.105 \pm 0.004}$ & ~~$743.51\pm141.65$ \\
$C_4$ Equiv. & FMNIST & Null-space (ours) & $0.147 \pm 0.004$ & ~~$\mathbf{0.27\pm0.12}$ \\
\midrule
$O(3)$ Equiv. & From \citep{finzi2021practical} & Data augm. (baseline) & $0.13 \pm 0.01$ & ~~$0.36 \pm 0.01$ \\
$O(3)$ Equiv. & From \citep{finzi2021practical} & Null-space (ours) & $\mathbf{0.11 \pm 0.01}$ & ~~$\mathbf{0.18 \pm 0.01}$ \\
\bottomrule
\end{tabular}}
\end{table}

\textbf{Enforcing $\ell_0$ sparsity:~~}
Enforcing a specific $\ell_0$ sparsity level during training can be achieved using learnable stochastic masks \citep{louizos2017learning, gallego2022controlled}. 
We optimize mask-probabilities $\mathbf{p}$ alongside model weights $\vec{\theta}$, and sample masks from a Bernoulli distribution, using \citet{yin2019understanding}'s straight-through estimator for $\mathbf{p}$'s gradients derived from the task loss. Our constrained optimization method is applied to $\mathbf{p}$ via a constraint $c(\mathbf{p}) = \frac{1}{N_p} \sum_{i=1}^{N_p} p_i - s_{\text{target}} = 0$, where $N_p$ is the total number of parameters, which drives the expected proportion of active weights towards a target sparsity level $s_{\text{target}}$.
We apply this method to train ResNet-18 \citep{he2016deep} on CIFAR-10 \citep{krizhevsky2009learning} (target $s_{\text{target}}=0.1$, i.e., 90\% sparsity) and SWIN-S \citep{liu2021swin} on ImageNet \citep{deng2009imagenet, russakovsky2015imagenet} (target $s_{\text{target}}=0.5$, i.e., 50\% sparsity). We compare against one-shot magnitude pruning~\citep{lee2024jaxpruner} as a baseline. \Cref{tab:l0_sparsity_results} shows that the null-space method achieves better test accuracies than magnitude pruning.
\begin{table}[t]
\caption{Test accuracy (\%) for models trained to target $\ell_0$ sparsity. Higher is better.}
\begin{center}
\label{tab:l0_sparsity_results}
\begin{tabular}{l l l c}
\toprule
Model & Dataset & Method & Test accuracy (\%, $\uparrow$) \\
\midrule
\multirow{2}{*}{ResNet-18} & \multirow{2}{*}{CIFAR-10 (90\% Sparsity)} & Magnitude pruning (baseline) & $0.6830 \pm 0.0139$ \\
& & Constrained $\ell_0$ (ours) & $\mathbf{0.7825 \pm 0.0015}$ \\
\midrule
\multirow{2}{*}{ResNet-18} & \multirow{2}{*}{SVHN (90\% Sparsity)} & Magnitude pruning (baseline) & $0.7987 \pm 0.0273$ \\
& & Constrained $\ell_0$ (ours) & $\mathbf{0.9150 \pm 0.0025}$ \\
\midrule

\multirow{2}{*}{SWIN-S} & \multirow{2}{*}{ImageNet (50\% Sparsity)} & Magnitude pruning (baseline) & $0.468$ \\
& & Constrained $\ell_0$ (ours) & $\mathbf{0.498}$ \\
\bottomrule
\end{tabular}
\end{center}
\end{table}

\textbf{Conservative property of score-based generative models:~~}
Score-based generative models learn the score function $\vec{s}(\vec{x}; \vec{\theta}) = \nabla_{\vec{x}} \log p_d(\vec{x})$, where $p_d(\vec{x})$ is the data distribution. For $\vec{s}$ to be a valid score function, it must be a conservative vector field, implying its Jacobian must be symmetric, $\vec{J}(\vec{s}) - \vec{J}(\vec{s})^\top = \vec{0}$; see \citep{chao2022investigating} for details. We apply the null-space method to enforce this conservativeness constraint $\vec{c}(\vec{\theta}) = ||\vec{J}(\vec{s}) - \vec{J}(\vec{s})^\top||_F^2 = 0$ during training. This encourages score-based models that are both architecturally flexible and theoretically sound.
We compare our null-space method to typical unconstrained score-based models (USBMs) and \citet{chao2022investigating}'s quasi-conservative score-based models (QCSBM) on various synthetic 2D datasets. The performance is evaluated via asymmetry (Asym), normalized asymmetry (NAsym), score error, and negative log-likelihood (NLL). The results in \Cref{tab:conservative_sbm_results} show that the null-space method outperforms the baselines by achieving the best NLL and a stricter enforcement of conservativeness.

\begin{table}[t]
\caption{Comparing the null-space method, USBMs, and QCSBM on three datasets: 8-Gaussian, Spirals, and Checkerboard. NLL is measured in bits/dimension. Lower is better.}
\label{tab:conservative_sbm_results}
\begin{center}
\begin{tabular}{lcccc@{}}
\toprule
 Dataset/Model & Asym ($\downarrow$) & NAsym ($\downarrow$) & Score Error ($\downarrow$) & NLL ($\downarrow$)  \\
\midrule
\textbf{8-Gaussian} \\
Null-Space & $\mathbf{2.66 \pm 1.34 \:\cdot 10^{-3}}$  & $\mathbf{3.20 \pm 1.47 \:\cdot 10^{-4}}$  & $1.52 \pm 0.08 $ & $\mathbf{3.70 \pm 0.03} $  \\
USBM &  $2.38\pm0.25\:\cdot 10^{-2}$  & $3.74 \pm 0.09 \:\cdot 10^{-3}$ & $1.50 \pm 0.06$ & $3.79 \pm 0.09 $  \\
QCSBM & $6.96 \pm 1.20\:\cdot 10^{-3}$ & $1.39 \pm 0.06 \:\cdot 10^{-3}$ & $\mathbf{1.49 \pm 0.05} $ &  $3.74 \pm 0.07 $ \\
\midrule
\textbf{Spirals} \\
Null-Space & $\mathbf{3.37 \pm 0.10 \:\cdot 10^{-3}}$ & $\mathbf{1.21 \pm 0.07 \:\cdot 10^{-3}}$ & $1.63 \pm 0.08$ & $\mathbf{3.53 \pm 0.15}$  \\
USBM & $6.68 \pm 3.48 \:\cdot 10^{-1}$ & $ 3.43 \pm 0.78\:\cdot 10^{-2}$ & $1.57 \pm 0.07$ & $4.11 \pm 0.01$  \\
QCSBM & $5.13 \pm 1.55 \:\cdot 10^{-2}$ & $9.43 \pm 0.44 \:\cdot 10^{-3}$ & $\mathbf{1.53 \pm 0.04 }$ & $4.02 \pm 0.05 $  \\
\midrule
\textbf{Checkerboard} \\
Null-Space & $\mathbf{4.26 \pm 2.35 \:\cdot 10^{-3}}$  & $\mathbf{9.87 \pm 5.21 \:\cdot 10^{-4}}$ & $1.65 \pm 0.09 $ & $\mathbf{3.69 \pm 0.05 }$ \\
USBM & $9.15 \pm 1.10 \:\cdot 10^{-2}$ & $1.91 \pm 0.16 \:\cdot 10^{-2}$ & $1.65 \pm 0.09$  & $3.74 \pm 0.07$ \\
QCSBM & $2.17 \pm 0.26 \:\cdot 10^{-2}$ & $5.86 \pm 0.52 \:\cdot 10^{-3}$ & $\mathbf{1.64 \pm 0.04 }$ & $3.76 \pm 0.01 $ \\
\bottomrule
\end{tabular}
\end{center}
\end{table}
The experiment code is under  \url{https://github.com/h-roy/code-projected-constraints}\label{fn:code_repo}.

\section{Limitations and conclusion}
This paper is the first step towards rectifying the misconception that least squares is a basic tool and only useful for linear regression.
To this end, our work explains how to compute values and (novel) gradients of matrix-free least squares solvers, offering JAX code that seamlessly embeds the now-differentiable $\texttt{LstSq}$ operator into modern deep learning software stacks.

The first main contribution of this article was the backward pass through \texttt{LstSq} (\Cref{theorem:lsq_gradients}), which requires exactly two extra forward passes per gradient. However, while efficient, our gradient expressions are currently limited to full-rank matrices, and future work should investigate the case of rank-deficient systems.

The second main contribution is the revitalization of the null-space method, an algorithm by \citet{yamashita1980differential} that relies heavily on numerical least squares via gradient projections. 
Our implementation of the null-space method is not just effective, as demonstrated on a range of experiments (\Cref{sec:demonstrations}), but it's also incredibly simple: all experiments use the same few lines of JAX code (\Cref{fig:optax_integration}).
However, the null-space method incurs an additional computational overhead on top of the underlying gradient-based optimization, which is the cost of solving the least-squares problem at each update step. Fortunately, the computational complexity of our least-squares solver of choice (LSMR) is linear in the number of rows and columns, and its space complexity matches that of standard gradient-based optimizers.  
And, unlike in \citet{yamashita1980differential}'s article, our experiments always mini-batch the data to account for deep-learning-sized datasets. 
While empirically, this choice proved effective, future work should analyze the convergence of such a stochastic variant of the null-space method. 
In any case, the constrained optimization of neural networks has become considerably easier, which means that many exciting applications can now be
built on top of these advancements.

\subsection*{Acknowledgements}
This work was supported by a research grant (42062) from VILLUM FONDEN.
This work was partly funded by the Novo Nordisk Foundation through the Center for Basic Research in Life Science (NNF20OC0062606).
This project received funding from the European Research Council (ERC) under the European Union's Horizon Programme (grant agreement 101125003).

\bibliography{references}
\bibliographystyle{iclr2026_conference}

\appendix

\section{Least squares redux}
\label{sec:lstsq_fwd}

At its core, a least-squares problem seeks an optimal solution $\vec{x}$ to a linear system $\vec{Ax=b}$, where $\vec{A} \in \mathbb{R}^{m \times n}$ and $\vec{b} \in \mathbb{R}^m$ \citep{bjorck2024numerical}.
The precise formulation depends on the dimensions of $\vec{A}$.
If $m \geq n$, $\vec{A}$ is a tall matrix, and the least-squares problem is about finding $\vec{x} \in \mathbb{R}^n$ that minimizes the squared Euclidean norm of the residual, $\arg\min_\vec{x} ||\vec{Ax-b}||^2$. The solution of the tall least-squares problem is the pseudo-inverse, which simplifies to $\mathbf{x}^\star = (\vec{A}^\top\vec{A})^{-1}\vec{A}^\top\vec{b}$ if $\vec{A}$ has full column rank.
If $m \leq n$, $\vec{A}$ is a wide matrix and the goal is to find $\arg\min_\vec{x} \frac{1}{2} ||\vec{x}||^2$ subject to $\vec{Ax=b}$. Like in the tall case, the solution is the pseudo-inverse. If  $\vec{A}$ has full row rank, it simplifies to $\vec{x}^\star = \vec{A^\top(AA^\top)^{-1}b}$.
Throughout this paper,  we assume $\vec{A}$ is too large for explicit instantiation, accessed only via matrix-vector and vector-matrix products (``matvecs'', ``vecmats'') $\vec{v} \mapsto \vec{Av}$. 

Several numerical strategies exist for solving the least-squares problem, each with implications for efficiency and stability depending on the problem's structure. One common strategy for solving the least-squares problem (illustrated with the wide case) is via the \emph{normal equations}. This involves solving $\vec{(AA^\top)\mathbf{y}^\star = \mathbf{b}}$ for $\vec{\mathbf{y}^\star}$, followed by $\vec{\mathbf{x}^\star = A^\top \mathbf{y}^\star}$. For smaller $\vec{m}$ (number of rows in $\vec{A}$), $\vec{AA^\top}$ can be formed explicitly and solved with direct methods like Cholesky factorizations. For larger $m$ where forming $\vec{AA^\top}$ is infeasible,
for example if $\vec{A}$ is the Jacobian of a neural network, iterative matrix-free solvers such as the conjugate gradient \citep[CG,][]{hestenes1952methods} or minimum residual method \citep[MINRES,][]{paige1975solution} can be applied, requiring only matrix-vector products with $\vec{A}$ and $\vec{A^\top}$.
However, methods relying on normal equations suffer a critical drawback: 
squaring the matrix $\vec{A}$ exacerbates ill-conditioning (the eigenvalues of $\vec{A A^\top}$ are the squared singular values of $\vec{A}$), 
leading to numerical instability and slow convergence for iterative solvers.
An example follows shortly.

As an alternative to solving normal equations, bidiagonalization methods offer a numerically robust foundation for large-scale least-squares problems, particularly when $\vec{A}$ is accessed only via matrix-vector products. The standard algorithm for this is the Golub-Kahan iterative bidiagonalization \citep{golub1965calculating}. This iterative process, after $k$ iterations and with starting vector $\vec{b}$, generates two matrices with orthonormal columns, $\vec{U} \in \mathbb{R}^{m \times k}$ and $\vec{V} \in \mathbb{R}^{n \times k}$, and a lower bidiagonal matrix $\vec{B} \in \mathbb{R}^{k \times k}$. 
These matrices are such that  $\vec{A \approx U B V^\top}$ and $\vec{V} \vec{e}_1 = \vec{b}/\|\vec{b}\|$ holds, where $ \vec{e}_1$ is the first unit basis vector. 
The quality of this approximation depends on the singular values of $\vec{A}$; details are in the book by \citet{golub2013matrix}.
Conceptually, if the process were run for enough iterations (e.g., $k = \min(m,n)$ assuming full rank), it would yield a full factorization $\vec{A} = \vec{U}\vec{B}\vec{V}^\top$, but the process is rarely run for that long.
The approximation $\vec{A} \approx \vec{U} \vec{B} \vec{V}^\top$, $\vec{V}\vec{e}_1 = \vec{b} / \|\vec{b}\|$ yields
\begin{align}\label{equation-bidiagonalization-for-lstsq}
\vec{A (A^\top A)^{-1} b}
\approx 
\vec{U B V^\top (V B^\top U^\top U B V^\top)^{-1} b} 
&=\|\vec{b}\| \vec{U (B^\top)^{-1} \mathbf{e}_1}.
\end{align}
Since only a linear system involving $\vec{B}^\top$ needs to be solved, squaring of matrices is circumvented.
This results in significantly improved numerical stability and often more rapid convergence to an accurate solution; see \Cref{example-bidiag-vs-cg}.

\newcommand{\BidiagCGTheme}{Bidiagonalization vs. CG}
\begin{example}[\BidiagCGTheme{}]
\label{example-bidiag-vs-cg}
Consider the following least-squares problem: $A$ is a randomly populated $10^5 \times 50$ matrix with singular values in 
$[1, 1/\epsilon]$ where $\epsilon$ is machine precision ($\approx 10^{-7}$).
Least-squares based on bidiagonalisation is closely related to solving the normal equations with CG \citep{paige1982lsqr}, but the fact that solving the normal equation via CG requires $\mathbf{A} \mathbf{A}^\top$, whereas bidiagonalization handles $\mathbf{A}$, affects the numerical reliability of the algorithm; see \Cref{figure:bidiag-vs-cg}.
For well-conditioned matrices, the choice between CG and bidiagonalization would not matter much. But for ill-conditioned matrices, where numerical robustness is important, solving least squares problems with bidiagonalization instead of CG is vital.
\end{example}

\begin{wrapfigure}[10]{r}{0.45\linewidth}
\vskip-2em
\includegraphics[width=\linewidth]{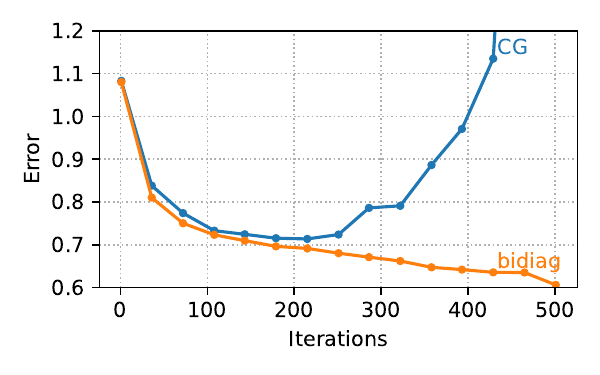}
\caption{\BidiagCGTheme{}.}
\label{figure:bidiag-vs-cg}
\end{wrapfigure}

The bidiagonalization solver from \Cref{equation-bidiagonalization-for-lstsq} is more robust and efficient than solving the normal equations with CG, but could still be improved:
\Cref{equation-bidiagonalization-for-lstsq} requires access to $\vec{U} \in \mathbb{R}^{m \times k}$, storing which is prohibitive for large problems.
There exist error-adaptive, $\mathcal{O}(\max\{m, n\}))$-memory versions of bidiagonalization solvers, 
namely, LSQR and LSMR \citep{paige1982lsqr,fong2011lsmr}, which avoid storing $\vec{U}$ or $\vec{V}$.
LSMR and LSQR are mathematically equivalent to applying MINRES, respectively, CG to the normal equations \citep{paige1982lsqr,fong2011lsmr}, but are more robust because they use bidiagonalization.
In the remainder of this article, when we discuss least-squares solution operators,
$\vec{x}^\star = \texttt{LstSq}(\vec{A}, \vec{b}, \lambda),$
we mean LSMR, unless specified otherwise. JAX code for LSMR is provided under the URL in the main paper (see \Cref{fn:code_repo}).

\section{Background on the method of adjoints}
\label{sec:four-step-program}
The method of adjoints offers a powerful technique for computing gradients of an objective function, say $\mu$, with respect to parameters, say $\theta$, and ``through'' an algorithm $\theta \mapsto \vec{x}$ whose outputs are implicitly defined by a set of constraints, $\vec{f(\theta, 
 x) = 0}$. 
The general procedure involves four key steps \citep{kramer2024gradients}.
(i) Identify the constraint that the algorithm's inputs and outputs must satisfy. 
(ii) Differentiate the constraints to obtain a linear relationship between the differentials. This typically takes the form 
\begin{align} \label{equation-linearized-constraint}
     \frac{\partial \vec{f}(\theta, \vec{x})}{\partial \vec{x}} \vec{d}\vec{x} + \frac{\partial \vec{f}(\theta, \vec{x})}{\partial \vec{\theta}}\vec{d}\theta = \vec{0},
\end{align}
where $\frac{\partial \vec{f}(\theta, \vec{x})}{\partial \vec{x}}$ and $\frac{\partial \vec{f}(\theta, \vec{x})}{\partial \vec{\theta}}$ are the Jacobians of the constraint, and $d\mu$, $d\vec{x}$, and $d\theta$ are infinitesimal perturbations.
(iii) Introduce an adjoint variable (respectively Lagrange multiplier) $\xi$, combine it with \Cref{equation-linearized-constraint}, and add it to the differential $d\mu = \langle \nabla_\vec{x} \mu, d\vec{x} \rangle$. This leads to an expression:
\begin{subequations}
\begin{align} 
d\mu 
&= \left\langle \nabla_\vec{x} \mu, \vec{d}\vec{x}\right\rangle 
+ \left\langle \xi, \frac{\partial \vec{f}(\theta, \vec{x})}{\partial \vec{x}} \vec{d}\vec{x} + \frac{\partial \vec{f}(\theta, \vec{x})}{\partial \vec{\theta}}\vec{d}\theta\right\rangle 
\\
&= \left\langle \nabla_\vec{x} \mu + \left(\frac{\partial \vec{f}(\theta, \vec{x})}{\partial \vec{x}}\right)^\top \xi, \vec{d}\vec{x} \right\rangle + \left\langle \left(\frac{\partial \vec{f}(\theta, \vec{x})}{\partial \vec{\theta}}\right)^\top \xi, \vec{d}\theta \right\rangle.
\end{align} 
\end{subequations}
(iv) To find the gradient $\nabla_\theta \mu$, identify the adjoint system $\nabla_\vec{x} \mu + \left(\frac{\partial \vec{f}(\theta, \vec{x})}{\partial \vec{x}}\right)^\top \xi = \vec{0}$. Solving for $\xi$ leads to $\nabla_\theta \mu = \left(\frac{\partial \vec{f}(\theta, \vec{x})}{\partial \vec{\theta}} \right)^\top \xi$.
Details: \citep{kramer2024gradients,blondel2024elements}.
The advantage of the adjoint method over other forms of deriving reverse-mode derivatives of computer programs is that it only requires an inner product and a constraint. Thus, it not only applies to vector-valued problems, but also to matrix- or function-space valued algorithms without much modification.
Therefore, we use the adjoint method for deriving reverse-mode derivatives of least squares codes.
    
\section{Proof of \Cref{theorem:lsq_gradients}}
\label{sec:derivation-lstsq-adjoints}
To prove \Cref{theorem:lsq_gradients}, we apply the four steps involved in the method of adjoints (\Cref{sec:four-step-program}) to the regularized least-squares problem defined in \Cref{eq:constr_lsq_param}.
We distinguish the cases of tall versus wide $\vec{A}$, because their  behaviours are slightly different in the limit of the regulariser approaching zero.

Let $\vec{A}(\theta)$, $\vec{b}$, and $\lambda$ be known.
Assume that $\vec{A}$ has full rank.
Recall the least-squares objective
\begin{align}
    \mathcal{L}(\vec{x}) = \|\vec{A(\theta) x - b} \|^2 + \lambda^2 \| \vec{x}\|^2
\end{align}
with minimum $\vec{x}^\star = \arg\min_\vec{x} \mathcal{L}(\vec{x}).$
Any such least-squares solution $\vec{x}^\star$ satisfies $\grad{\vec{x}}\mathcal{L} = \vec{0}$, which means
\begin{align}
    \vec{A}(\theta)^\top(\vec{A}(\theta) \vec{x}^\star - \vec{b}) + \lambda^2 \vec{x}^\star
    =
    \vec{0}. 
\end{align}
Reorder the terms to obtain
\begin{align} \label{equation-least-squares-tall}
\vec{x}^\star = (
\vec{A}(\theta)^\top \vec{A}(\theta) + \lambda^2 \mathbb{I})^{-1} \vec{A}(\theta)^\top \vec{b} = \texttt{LstSq}(\vec{A}(\theta), \vec{b}, \lambda).
\end{align}
This defines the \texttt{LstSq} operator; however, in practice, we do not evaluate \texttt{LstSq} with \Cref{equation-least-squares-tall}, but with our implementation of LSMR.
Due to the ``push-through identity' $(\vec{A}(\theta)^\top \vec{A}(\theta) + \lambda^2 \mathbb{I})^{-1}\vec{A}(\theta)^\top = \vec{A}(\theta)^\top(\vec{A}(\theta)\vec{A}(\theta)^\top  + \lambda^2 \mathbb{I})^{-1}$ holds, there is an equivalent representation of $\vec{x}^\star$,
\begin{align} 
\label{equation-least-squares-wide}
\vec{x}^\star = \vec{A}(\theta)^\top(\vec{A}(\theta)\vec{A}(\theta)^\top  + \lambda^2 \mathbb{I})^{-1}\vec{b}.
\end{align}
Both representations in \Cref{equation-least-squares-tall} and \Cref{equation-least-squares-wide} are the same, but they behave differently for $\lambda \rightarrow 0$ and for different shapes of $\vec{A}(\theta)$. If $\vec{A}(\theta)$ is tall (and has full rank), then $\vec{A}(\theta)^\top \vec{A}(\theta)$ is invertible but $\vec{A}(\theta) \vec{A}(\theta)^\top$ is not; and vice versa for when $\vec{A}(\theta)$ is wide. Since we want the gradients to hold for both wide and tall matrices, and for any $\lambda$, we distinguish tall and wide settings below.

\subsection{Tall case}
In the remainder of this section, we refer to $\vec{x}^\star$ as $\vec{x}$ and to $\vec{A}(\theta)$ as $\vec{A}$.
Assume $\vec{A}$ has full rank.
If $\vec{A}$ is tall, we apply the adjoint method to the constraint 
\begin{align}
(\vec{A}^\top \vec{A} + \lambda^2 \mathbb{I})\vec{x} =  \vec{A}^\top \vec{b} 
\end{align}
because this expression always yields a unique $\vec{x}$ even for $\lambda \rightarrow 0$.
Differentiate the constraint,
\begin{align}
    \vec{d}\vec{A}^\top \vec{A} \vec{x} + \vec{A}^\top \vec{d}\vec{A} \vec{x} + \vec{A}^\top \vec{A} \vec{d}\vec{x} + 2\lambda d \lambda \vec{x} + \lambda^2 \vec{d}\vec{x} = \vec{d}\vec{A}^\top \vec{b} + \vec{A}^\top \vec{d} \vec{b}.
\end{align}
For any scalar $\mu = \mu(\vec{x})$, let $\xi$ be any vector with as many dimensions as there are constraints.
Let $\nabla_\vec{x} \mu$ be given.
Then,
\begin{subequations}
\begin{align}
    d \mu 
    &= \langle \nabla_\vec{x} \mu, \vec{d}\vec{x} \rangle
    \\
    &= \langle \nabla_\vec{x} \mu, \vec{d}\vec{x} \rangle
    + \langle \xi,     \vec{d}\vec{A}^\top \vec{A} \vec{x} + \vec{A}^\top \vec{d}\vec{A} \vec{x} + \vec{A}^\top \vec{A} \vec{d}\vec{x} + 2\lambda d \lambda \vec{x} + \lambda^2 \vec{d}\vec{x} - \vec{d}\vec{A}^\top \vec{b} - \vec{A}^\top \vec{d} \vec{b} \rangle
    \\
    &= 
    \langle \vec{z}_\vec{x}, \vec{d}\vec{x} \rangle 
    + \langle \vec{z}_\vec{b}, \vec{d}\vec{b} \rangle 
    + \langle \vec{Z}_\vec{A}, \vec{d}\vec{A} \rangle 
    + \langle z_\lambda, d \lambda \rangle 
\end{align}
\end{subequations}
for variables 
\begin{subequations}
\begin{align}
    \vec{z}_\vec{x} &\coloneqq
    \nabla_\vec{x} \mu + (\vec{A}^\top \vec{A} + \lambda^2 \mathbb{I}) \xi
    \\
    \vec{z}_\vec{b} &\coloneqq -\vec{A} \xi 
    \\
    \vec{Z}_\vec{A} &\coloneqq \vec{A} \vec{x} \xi^\top + \vec{A} \xi \vec{x}^\top - \vec{b} \xi^\top 
    \\
    z_\lambda &\coloneqq 2 \lambda \langle \xi, \vec{x} \rangle. 
\end{align}
\end{subequations}
Due to the rules of the adjoint method, if $\vec{z}_\vec{x} = 0$, then $\vec{z}_\vec{b} = \nabla_\vec{b} \mu$, $\vec{Z}_\vec{A} = \nabla_\vec{A} \mu$, and $z_\lambda = \nabla_\lambda \mu$ hold.
Thus,
\begin{subequations}
    \begin{align}
    \nabla_\vec{b} \mu &= \vec{A} (\vec{A}^\top \vec{A} + \lambda^2 \mathbb{I})^{-1} \nabla_\vec{x} \mu = \texttt{LstSq}(\vec{A}, \nabla_\vec{x} \mu, \lambda)
    \\
    \xi &= -(\vec{A}^\top \vec{A} + \lambda^2 \mathbb{I})^{-1} \nabla_\vec{x} \mu = (\vec{A}^\top \vec{A})^{-1} \vec{A} ^\top \nabla_\vec{b} \mu = \texttt{LstSq}(\vec{A}^\top, \nabla_\vec{b} \mu, 0)
    \\
    \nabla_\vec{A} \mu &= (\vec{A} \vec{x} - \vec{b}) \xi^\top + (\nabla_\vec{b} \mu) \vec{x}^\top 
    \\ 
    \nabla_\lambda \mu &= 2 \lambda \langle \xi, \vec{x} \rangle
\end{align}
\end{subequations}
are the desired gradients.
We can evaluate the required quantities with the same least-squares solver that has been employed in the forward pass.

If $\vec{A}$ depends on parameters $\theta$, then $\nabla_\vec{A} \mu$ can be turned into $\nabla_\theta \mu$ via (abbreviate $\vec{r} \coloneqq \vec{A} \vec{x} - \vec{b}$),
\begin{subequations}
\begin{align}
    d \mu 
    &= \langle \nabla_{\vec{A}(\theta)} \mu, \vec{d} \vec{A}(\theta) \rangle + \text{const}
    \\ 
    &= \left\langle \vec{r} \xi^\top + (\nabla_\vec{b} \mu) \vec{x}^\top, \frac{\partial}{\partial \theta} \vec{A}(\theta) \vec{d} \vec{\theta} \right\rangle + \text{const}
    \\ 
    &= 
    \nabla_\theta g(\theta) + \text{const},
\end{align}
\end{subequations}
where ``$+ \text{const}$'' means that the line depends on quantities that are not related to $\vec{d}\vec{A}(\theta)$, and
where $g$ is defined as 
$g(\theta) \coloneqq \langle \vec{r}, \vec{A}(\theta) \xi \rangle + \langle \nabla_\vec{b} \mu, \vec{A}(\theta) \vec{x} \rangle$. Once $\vec{r}$, $\xi$, $\vec{x}$, and $\nabla_\vec{b}$ are available, $g(\theta)$ and its $\theta$-gradient can be evaluated with automatic differentiation.
This concludes the gradients for the tall case.
\subsection{Wide case}
For the wide case, we proceed in the same way but we apply the adjoint method to the constraints
\begin{align}
    \vec{x} = \vec{A}^\top \vec{y}, 
    \qquad (\vec{A} \vec{A}^\top + \lambda^2 \mathbb{I})\vec{y} = \vec{b}
\end{align}
because these imply a well-defined $\vec{x}$ if $\vec{A}$ is wide, even for $\lambda \rightarrow 0$ (we always assume $\vec{A}$ is full rank).
Differentiate the constraints 
\begin{subequations}
\begin{align}
&\vec{d} \vec{x} = \vec{d}\vec{A}^\top \vec{y} + \vec{A}^\top \vec{d} \vec{y}, \\ 
&\vec{d} \vec{A} \vec{A}^\top \vec{y} + \vec{A} \vec{d} \vec{A}^\top \vec{y} + \vec{A} \vec{A}^\top \vec{d} \vec{y} + 2 \lambda d \lambda \vec{y} + \lambda^2 \vec{d} \vec{y} = \vec{d} \vec{b}
\end{align}
\end{subequations}
Let $\mu = \mu(\vec{x})$ be a scalar function. Let $\vec{p}$ and $\vec{q}$ be two vectors with the same dimensions as the two constraints. Then,
\begin{subequations}
\begin{align}
    d \mu 
    &= \langle \nabla_\vec{x} \mu, \vec{d} \vec{x} \rangle 
    \\ 
    &= \langle \nabla_\vec{x} \mu, \vec{d} \vec{x} \rangle 
    + \langle \vec{p}, -\vec{d}\vec{x} + \vec{d}\vec{A}^\top \vec{y} + \vec{A}^\top \vec{d} \vec{y} \rangle\\
    &\qquad\qquad 
    + \langle \vec{q}, -\vec{d} \vec{A} \vec{A}^\top \vec{y} + \vec{A} \vec{d} \vec{A}^\top \vec{y} + \vec{A} \vec{A}^\top \vec{d} \vec{y} + 2 \lambda d \lambda \vec{y} + \lambda^2 \vec{d} \vec{y} - \vec{d} \vec{b}
 \rangle
    \\ 
    &=
    \langle \vec{z}_\vec{x} , \vec{d} \vec{x}\rangle
    + \langle \vec{z}_\vec{y} , \vec{d} \vec{y}\rangle
    + \langle \vec{z}_\vec{b} , \vec{d} \vec{b}\rangle
    + \langle \vec{Z}_\vec{A} , \vec{d} \vec{A}\rangle
    + \langle z_\lambda ,d \lambda\rangle
\end{align}
\end{subequations}
with the variables 
\begin{subequations}
\begin{align}
    \vec{z}_\vec{x} &= \nabla_\vec{x} \mu - \vec{p} \\
    \vec{z}_\vec{y} &= \vec{A} \vec{p} + \vec{A} \vec{A}^\top \vec{q} + \lambda^2 \vec{q}  \\
    \vec{z}_\vec{b} &= - \vec{q}\\
    \vec{Z}_\vec{A} &= \vec{y} \vec{p}^\top - \vec{q} \vec{y}^\top \vec{A} +  \vec{y} \vec{q}^\top \vec{A}  = \vec{y} \vec{p}^\top - \vec{q} \vec{x}^\top + \vec{y} \vec{q}^\top \vec{A} \\
    z_\lambda &= 2 \lambda \langle \vec{q}, \vec{y} \rangle
\end{align}
\end{subequations}
If $\vec{z}_\vec{x} = 0$ and $\vec{z}_\vec{y} = 0$, then by the adjoint method, $\vec{z}_\vec{b} = \nabla_\vec{b} \mu$, $\vec{Z}_\vec{A} = \nabla_\vec{A} \mu$, and $z_\lambda = \nabla_\lambda \mu$ holds.

Before solving $\vec{z}_\vec{x} = 0$ and $\vec{z}_\vec{y} = 0$ for suitable $\vec{p}$ and $\vec{q}$, note how we can obtain $\vec{y}$ from the least-squares solution $\vec{x}$ with another least-squares call, since 
\begin{align}
    \vec{y} 
    = (\vec{A} \vec{A}^\top + \lambda^2 \mathbb{I})^{-1} \vec{b}
    = (\vec{A} \vec{A}^\top)^{-1} \vec{A}\vec{x}
    = \texttt{LstSq}(\vec{A}^\top, \vec{x}, 0)    
\end{align}
holds.
Now, $\vec{z}_\vec{x} = 0$ implies $\vec{p} = \nabla_\vec{x} \mu$, and thus
\begin{align}
    \vec{q} = (\vec{A} \vec{A}^\top + \lambda^2 \mathbb{I})^{-1} \vec{A} \nabla_\vec{x} \mu = \texttt{LstSq}(\vec{A}, \nabla_\vec{x} \mu, \lambda)
\end{align}
is another least-squares call.
Therefore,
\begin{align}
    \nabla_\vec{b} \mu = \texttt{LstSq}(\vec{A}, \nabla_\vec{x} \mu, \lambda)
\end{align}
as well as 
\begin{align}
    \nabla_\vec{A} \mu = \vec{y} \vec{r}^\top  + \nabla_\vec{b} \mu \vec{x}^\top,
    \qquad 
\vec{r} = \vec{A}^\top \nabla_\vec{b} \mu - \nabla_\vec{x} \mu,
\qquad 
\nabla_\lambda \mu = -2 \lambda \langle \nabla_\vec{b} \mu, \vec{y} \rangle.
\end{align}
If $\vec{A}$ depends on $\theta$, like in the tall case, we can turn $\nabla_\vec{A} \mu$ into $\nabla_\theta \mu$ by. 
\begin{align}
    d \mu 
    = \langle \nabla_\vec{A} \mu, \vec{d} \vec{A} \rangle 
    + \text{const}
    = \left\langle \nabla_\vec{b} \mu \vec{x}^\top + \vec{y} \vec{r}^\top, \frac{\partial}{\partial \theta} \vec{A} \vec{d} \theta \right\rangle+ \text{const} 
    = \nabla_\theta g(\theta)+ \text{const},
\end{align}
where $g(\theta) = \langle \nabla_\vec{b} \mu, \vec{A} \vec{x} \rangle + \langle \vec{y}, \vec{A} \vec{r} \rangle$. As soon as $\vec{y}$, $\vec{r}$, $\vec{x}$, and $\nabla_\vec{b} \mu$ are available, $\nabla_\theta g$ can be evaluated with automatic differentiation.
This concludes the proof.
\section{Weighted least-squares}
\label{sec:weighted-least-squares}
In this section, we show how it is no loss of generality to consider unweighted least-squares problems only.
We show how we can use an unweighted least-squares code to solve weighted problems.
We only discuss the wide case, because in the tall case, absorbing the weights and biases in $\vec{A}$ and $\vec{b}$ is relatively straightforward.

Specifically, consider the weighted least-squares problem:
\begin{align}
\arg\min\|\vec{Wx-v}\|^2 \quad\text{subject to}\quad \vec{Ax=b}.
\end{align}
Here, $\vec{W}$ is tall or square, $\vec{A}$ is wide or square, and $\vec{v}$ and $\vec{b}$ are vectors.
Substitute $\vec{z \coloneqq Wx-v}$:
\begin{align}
\arg\min\|\vec{z}\|^2 \quad\text{subject to}\quad \begin{cases} \vec{Ax=b} \\ \vec{z=Wx-v} \end{cases}.
\end{align}
Reorganise $\vec{z=Wx-v}$ into $\vec{x=W^{+}(z+v)}$, with pseudoinverse $\vec{W^+}$ (same shape as $\vec{W^\top}$):
\begin{align}
\arg\min\|\vec{z}\|^2 \quad\text{subject to}\quad \vec{A W^+ z=b - AW^{+} v}.
\end{align}
Solve for $\vec{z}$ with standard least-squares code. Then, get $\vec{x}$ via $\vec{x = W^+(z + v)}$.

\section{Convergence of the null-space method}
\label{section-convergence-null-space-method}

\begin{theorem} \label{thm:convergence}
Define a continuously differentiable constraint function $\vec{c}: \mathbb{R}^D\rightarrow \mathbb{R}^O$, where $D$ is the number of neural network parameters and $O$ is the number of constraints. We assume $O \leq D$. If the update rule in \Cref{eq:null_space_update} converges to a point $\theta^{*}$, then the point satisfies:
\begin{enumerate}
    \item Primal Feasibility: The constraint is satisfied, i.e., $\vec{c}(\theta^{*})=0$.
    \item Lagrangian Stationarity: There exists some $\lambda^{*}$ such that $\nabla\mathcal{L}(\theta^{*})=\vec{J_{c}}(\theta^{*})^{\top}\lambda^{*}$.
\end{enumerate}
\end{theorem}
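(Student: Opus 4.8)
The plan is to use the fact that convergence of the iteration forces $\theta^{*}$ to be a fixed point of the update map, and then to extract both KKT conditions by decomposing the (vanishing) update increment into two orthogonal pieces: one living in the null space of $\vec{J_{c}}(\theta^{*})$ and one living in the range of $\vec{J_{c}}(\theta^{*})^{\top}$.

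\textbf{Step 1 (the limit is a fixed point).} Write the update in \Cref{eq:null_space_update} as $\theta_{t+1} = F(\theta_t)$. The map $F$ is continuous in a neighbourhood of $\theta^{*}$: $\vec{c}$ is $C^{1}$, so $\theta \mapsto \vec{J_{c}}(\theta)$ is continuous, $\mathcal{L}$ is $C^{1}$, and $\vec{J_{c}}(\theta^{*})\vec{J_{c}}(\theta^{*})^{\top}$ is invertible (full row rank of $\vec{J_{c}}(\theta^{*})$ is exactly what makes the update well-defined, and it uses $O \le D$), hence $\vec{J_{c}}(\theta)\vec{J_{c}}(\theta)^{\top}$ stays invertible near $\theta^{*}$ by continuity of the determinant. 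Letting $t \to \infty$ in $\theta_{t+1} = F(\theta_t)$ gives $\theta^{*} = F(\theta^{*})$, i.e.\ the increment vanishes:
\[
\vec{0} = -\eta\,\vec{P}(\theta^{*})\nabla\mathcal{L}(\theta^{*}) + \gamma\,\vec{J_{c}}(\theta^{*})^{\top}\big(\vec{J_{c}}(\theta^{*})\vec{J_{c}}(\theta^{*})^{\top}\big)^{-1}\vec{c}(\theta^{*}),
\]
where $\vec{P}(\theta^{*}) \coloneqq \mathbb{I} - \vec{J_{c}}(\theta^{*})^{\top}(\vec{J_{c}}(\theta^{*})\vec{J_{c}}(\theta^{*})^{\top})^{-1}\vec{J_{c}}(\theta^{*})$ is the orthogonal projector onto $\ker \vec{J_{c}}(\theta^{*})$.

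\textbf{Step 2 (orthogonal split).} Next I would invoke the decomposition $\mathbb{R}^{D} = \ker\vec{J_{c}}(\theta^{*}) \oplus \operatorname{range}\vec{J_{c}}(\theta^{*})^{\top}$ into orthogonal complements (using full row rank). The first term above lies in $\ker\vec{J_{c}}(\theta^{*})$, since $\vec{J_{c}}(\theta^{*})\vec{P}(\theta^{*}) = \vec{0}$ and $\operatorname{range}(\vec{P}(\theta^{*})) = \ker\vec{J_{c}}(\theta^{*})$; the second term manifestly lies in $\operatorname{range}\vec{J_{c}}(\theta^{*})^{\top}$. Because these subspaces intersect only in $\vec{0}$, both terms must vanish separately. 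From the second term vanishing, together with $\gamma > 0$, injectivity of $\vec{J_{c}}(\theta^{*})^{\top}$, and invertibility of $(\vec{J_{c}}(\theta^{*})\vec{J_{c}}(\theta^{*})^{\top})^{-1}$, we conclude $\vec{c}(\theta^{*}) = \vec{0}$, which is primal feasibility. From the first term vanishing, $\vec{P}(\theta^{*})\nabla\mathcal{L}(\theta^{*}) = \vec{0}$, so $\nabla\mathcal{L}(\theta^{*}) \in \operatorname{range}\vec{J_{c}}(\theta^{*})^{\top}$, giving $\lambda^{*} \coloneqq (\vec{J_{c}}(\theta^{*})\vec{J_{c}}(\theta^{*})^{\top})^{-1}\vec{J_{c}}(\theta^{*})\nabla\mathcal{L}(\theta^{*})$ with $\nabla\mathcal{L}(\theta^{*}) = \vec{J_{c}}(\theta^{*})^{\top}\lambda^{*}$, i.e.\ Lagrangian stationarity.

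I expect the only genuine subtlety to be Step~1: one must justify that the limit is really a fixed point, which relies on continuity of $F$ near $\theta^{*}$ and hence on the implicit full-rank hypothesis $\operatorname{rank}\vec{J_{c}}(\theta^{*}) = O$; once that is in place, the remaining argument is the clean observation that a null-space vector and a row-space vector cannot cancel. It is worth stating explicitly that the projector identities $\vec{J_{c}}\vec{P} = \vec{0}$ and $\operatorname{range}(\vec{P}) = \ker\vec{J_{c}}$ are precisely where the assumption $O \le D$ (full row rank, so that $\vec{J_{c}}\vec{J_{c}}^{\top}$ is invertible) is used.
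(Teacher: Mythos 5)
Your proposal is correct and follows essentially the same route as the paper: both arguments rest on the update increment vanishing at the limit and on splitting it into a component in $\ker\vec{J_{c}}(\theta^{*})$ and one in $\operatorname{range}\vec{J_{c}}(\theta^{*})^{\top}$, the only cosmetic difference being that the paper isolates the constraint term by left-multiplying the increment by $\vec{J_{c}}(\theta_k)$ and passing to the limit, whereas you first establish the fixed-point equation and then invoke the orthogonal direct-sum decomposition. Your explicit remark that continuity of the update map (hence full row rank of $\vec{J_{c}}(\theta^{*})$, so that $\vec{J_{c}}\vec{J_{c}}^{\top}$ is invertible) is needed to pass to the limit is a point the paper leaves implicit.
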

\begin{proof}
The null-space update step can be written explicitly as (recall $O \leq D$): 
\begin{subequations}
\begin{align} \theta_{t+1} - \theta_t 
    =&\, -\eta \arg \min_{\delta} \left\{\frac{1}{2}||\delta - \eta\nabla_{\theta}\mathcal{L}(\theta_t))||^{2} 
    \quad 
    \text{s.t.} \quad \vec{J_{c}}(\theta_t)\delta=-\gamma\vec{c}(\theta_{t})\right\}
  \label{eq:linearized-constraint-updates}\\
    =&\, -\eta\nabla_{\theta}\mathcal{L}(\theta_t) - \texttt{LstSq}\big[\vec{J}_\vec{c}(\theta_t), \vec{J_c}(\theta_t)\eta\nabla_{\theta}\mathcal{L}(\theta_t) - \gamma \vec{c}(\theta_{t})\big] \\
    =&\, -\eta \big[\small(\mathbb{I} - (\vec{J}_\vec{c}(\theta_t))^+ \vec{J}_\vec{c}(\theta_t)\small)  \nabla_{\theta}\mathcal{L}(\theta_t)\big] - \gamma \big[(\vec{J}_\vec{c}(\theta_t))^+  \vec{c}(\theta_t)\big], \label{eq:explicit-null-space-update}
    \\ 
    \textrm{where}\quad (\vec{J}_\vec{c}(\theta_t))^+ 
    \coloneqq&\, \vec{J}_\vec{c}(\theta_t)^\top \left(\vec{J}_\vec{c}(\theta_t)\vec{J}_\vec{c}(\theta_t)^\top \right)^{-1}.
\end{align}
\end{subequations}
Here, $(\vec{J}_\vec{c}(\theta_t))^+$ is the pseudo-inverse, which means it satisfies $\vec{J}_\vec{c}(\theta_t)(\vec{J}_\vec{c}(\theta_t))^+=\mathbb{I}$.

We begin by proving primal feasibility. We observe that:
\begin{subequations}
\begin{align}
    \vec{J_c}(\theta_k)(\theta_{k+1} - \theta_{k}) 
        &=
        - \vec{J_c}(\theta_k) \big[\eta\left(\mathbb{I} - (\vec{J}_\vec{c}(\theta_t))^+ \vec{J}_\vec{c}(\theta_t)\right)  \nabla_{\theta}\mathcal{L}(\theta_t) + \gamma (\vec{J}_\vec{c}(\theta_t))^+  \vec{c}(\theta_t)\big]
        \\ 
        &= -\eta \left(\vec{J_c}(\theta_k) - \vec{J_c}(\theta_k) \right) \nabla_\theta \mathcal{L}(\theta_t) - \gamma \vec{J}_\vec{c}(\theta_t) (\vec{J}_\vec{c}(\theta_t))^+  \vec{c}(\theta_t) 
        \\
        &= -\gamma\, \vec{c}(\theta_t).
\end{align}
\end{subequations}
Since we assume that the update rule converges to some $\theta^\star$, we have that $\theta_{k+1} - \theta_k \rightarrow \vec{0}$, this implies that as $k\rightarrow \infty$
\begin{align}
    \vec{c}(\theta^\star) = -\frac{1}{\gamma} \vec{J_c}(\theta_k)(\theta_{k+1} - \theta_{k}) \longrightarrow  -\frac{1}{\gamma} \vec{J_c}(\theta^\star) \vec{0} = \vec{0}.
\end{align}
Therefore $\vec{c(\theta^\star}) = \vec{0}$. This gives us primal feasibility.

To show Lagrangian stationarity, we observe that the updates and the constraint value are both $\vec{0}$ at convergence. This implies that \Cref{eq:explicit-null-space-update} approaches zero which in the limit, of $k \rightarrow \infty$, leads to:
\begin{align}
    \nabla_{\theta}\mathcal{L}(\theta^\star) &=  \vec{J}_\vec{c}(\theta^\star)^\top \left(\vec{J}_\vec{c}(\theta^\star) \vec{J}_\vec{c}(\theta^\star)^\top \right)^{-1} \vec{J}_\vec{c}(\theta^\star)  \nabla_{\theta}\mathcal{L}(\theta^\star).
\end{align}
Define $\lambda^\star = \left(\vec{J}_\vec{c}(\theta^\star)\vec{J}_\vec{c}(\theta^\star)^\top \right)^{-1} \vec{J}_\vec{c}(\theta^\star)  \nabla_{\theta}\mathcal{L}(\theta^\star)$. This gives Lagrangian stationarity.
\end{proof}

\section{Geometric interpretation of the null-space update}
\label{sec:geometric_interpretation}

The explicit null space update rule, as stated above, is given by:
\begin{equation} \label{eq:null_space_update}
\theta_{k+1} = \theta_{k} - \left[\eta\left(\mathbb{I} - \vec{J_{c}^{\top}(J_{c}J_{c}^{\top})^{-1}J_{c}} \right)\nabla\mathcal{L}(\theta_{t}) + \gamma \vec{J_{c}^{\top}(J_{c}J_{c}^{\top})^{-1}c}(\theta_{t})\right]
\end{equation}
This section provides a geometric perspective to build intuition for the update's components and behavior, formalizing it using concepts from differential geometry.

At any parameter iterate $\vec{\theta}_t$, the update $\Delta\vec{\theta}_t$ from \Cref{eq:null_space_update} can be understood as performing two simultaneous updates related to the local geometry defined by the constraint Jacobian $\vec{J_c}(\vec{\theta}_t)$:
\begin{enumerate}
    \item \textbf{Loss minimization on the tangent hyperplane:} The term $-\eta (\mathbb{I} - \vec{J_c^\top (J_c J_c^\top)^{-1} J_c}) \nabla\mathcal{L}(\vec{\theta}_t)$ projects the negative loss gradient onto the null-space of $\vec{J_c}(\vec{\theta}_t)$. This null-space, $\text{ker}(\vec{J_c}(\vec{\theta}_t))$, is the tangent space to the constraint manifold $\vec{c(\vec{\theta}) = c(\vec{\theta}_t)}$ at $\vec{\theta}_t$. This step aims to decrease the loss $\mathcal{L}$ by moving along directions where the linearized constraint value does not change.
    \item \textbf{Constraint satisfaction step:} The term $\gamma\vec{J_c}^\top (\vec{J_c J_c}^\top)^{-1} \vec{c}(\vec{\theta}_t)$ takes a Gauss--Newton step towards satisfying the constraints. This direction lies in the row space of $\vec{J_c}(\vec{\theta}_t)$, $\text{im}(\vec{J_c}(\vec{\theta}_t)^\top)$, which is orthogonal to $\text{ker}(\vec{J_c}(\vec{\theta}_t))$. 
\end{enumerate}
These components ensure that the optimization process iteratively reduces the loss while driving the parameters towards the feasible set where $\vec{c}(\vec{\theta})=\vec{0}$.

We can formalize this intuition in the language of Riemannian geometry. For a given parameter $\theta\in\mathbb{R}^{D}$ and a continuously differentiable constraint function $\vec{c}$, we can define two relevant manifolds embedded in $\mathbb{R}^{D}$. Let $\mathcal{M}_{\theta} = \{\theta' \in \mathbb{R}^D \textrm{ such that } \vec{c}(\theta') = \vec{c}(\theta)\}$ be the kernel manifold where the constraint value is constant and equal to $\vec{c}(\theta)$. Its tangent space $T_{\theta}\mathcal{M}_{\theta}$ represents directions where the constraint doesn't change locally. Let $\mathcal{N}_{\theta}$, the image manifold, be a local manifold transversal to $\mathcal{M}_{\theta}$ at $\theta$, representing directions where the constraint value necessarily changes. Existence of these manifolds is stated and proved formally below:
\begin{theorem} \label{thm:manifold_decomposition}
For any parameter $\theta$, suppose the set of parameters that have the same constraint value as $\theta$ is denoted by $\mathcal{M}_{\theta}=\{\theta^{\prime}\in\mathbb{R}^{D} \mid \vec{c}(\theta^{\prime})=\vec{c}(\theta)\}$. Assuming $\vec{J_c}(\theta)$ has full rank, this set is locally a smooth manifold embedded in $\mathbb{R}^{D}$. Furthermore, there exists a local manifold $\mathcal{N}_{\theta}$ through $\theta$ such that $\mathbb{R}^{D}$ can be locally viewed as a product space involving these manifolds, and their tangent spaces $T_{\theta}\mathcal{M}_{\theta}$ and $T_{\theta}\mathcal{N}_{\theta}$ are orthogonal at $\theta$.
\end{theorem}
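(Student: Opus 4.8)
The plan is to deduce both claims from the inverse function theorem applied to a single auxiliary map. Fix $\theta$ and write $J \coloneqq \vec{J_c}(\theta) \in \mathbb{R}^{O \times D}$, which has rank $O$ by assumption. Let $K \coloneqq \ker J \subseteq \mathbb{R}^D$, a subspace of dimension $D - O$, and let $P \colon \mathbb{R}^D \to K$ denote orthogonal projection onto $K$. Define $\Psi \colon \mathbb{R}^D \to K \times \mathbb{R}^O$ by $\Psi(\theta') \coloneqq \big(P(\theta' - \theta),\, \vec{c}(\theta') - \vec{c}(\theta)\big)$. This map is continuously differentiable (since $\vec{c}$ is), it sends $\theta \mapsto (0,0)$, and its differential at $\theta$ is the linear map $v \mapsto (Pv,\, Jv)$.

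First I would check that $D\Psi(\theta)$ is a linear isomorphism. Since $\dim(K \times \mathbb{R}^O) = (D-O) + O = D$, it suffices to verify injectivity: if $Pv = 0$ and $Jv = 0$, then $v \in \ker J = K$, and so $v = Pv = 0$, using that $P$ restricted to $K$ is the identity. Hence $D\Psi(\theta)$ is invertible, and by the inverse function theorem there is an open neighbourhood $U \ni \theta$ on which $\Psi|_U$ is a diffeomorphism onto an open set $V \subseteq K \times \mathbb{R}^O$; after shrinking, I may take $V = V_1 \times V_2$, a product of open balls around $0$ in $K$ and in $\mathbb{R}^O$.

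Next I would read the manifolds off this chart. Intersected with $U$, the level set $\mathcal{M}_\theta$ equals $\Psi|_U^{-1}(V_1 \times \{0\})$, the preimage of a coordinate slice, hence a smooth embedded submanifold of $\mathbb{R}^D$ of dimension $D-O$; differentiating the defining relation $\vec{c}(\theta') = \vec{c}(\theta)$ gives $T_\theta \mathcal{M}_\theta = \ker J = K$. Define $\mathcal{N}_\theta \coloneqq \Psi|_U^{-1}(\{0\} \times V_2)$, the complementary slice; it is a smooth $O$-dimensional submanifold through $\theta$, and $\Psi|_U$ exhibits $U$ as diffeomorphic to $(\mathcal{M}_\theta \cap U) \times (\mathcal{N}_\theta \cap U)$, which is the claimed local product structure. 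Its tangent space is $T_\theta\mathcal{N}_\theta = D\Psi(\theta)^{-1}(\{0\} \times \mathbb{R}^O) = \{v : Pv = 0\} = K^\perp = \operatorname{im} J^\top$, which is orthogonal to $T_\theta\mathcal{M}_\theta = K$. (Alternatively one may take $\mathcal{N}_\theta = (\theta + \operatorname{im} J^\top) \cap U$, which has the same tangent space at $\theta$.)

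These are routine consequences of the inverse/implicit function theorem, so there is no deep obstacle; the only points requiring care are bookkeeping ones. The statement is inherently local, so every assertion is about the neighbourhood $U$, and the transversal manifold $\mathcal{N}_\theta$ is not unique. One must also correctly identify $T_\theta\mathcal{M}_\theta$ with $\ker J$ and its orthogonal complement with the row space $\operatorname{im} J^\top$ — both immediate once $J$ has full rank $O$ — and be slightly careful that the product-chart $\Psi$ is chosen so that its two coordinate slices are exactly the two manifolds, which is why defining $\Psi$ via the orthogonal projector $P$ (rather than an arbitrary complement) is convenient.
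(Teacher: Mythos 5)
Your proof is correct, and it follows the same overall strategy as the paper's: apply the inverse function theorem to an auxiliary map $\mathbb{R}^D \to \mathbb{R}^D$ whose two coordinate blocks cut out $\mathcal{M}_\theta$ and $\mathcal{N}_\theta$ as slice charts. The difference lies in the choice of that map. The paper reorders coordinates so that the first $O$ columns of $\vec{J_c}(\theta)$ are independent and uses $\alpha(\theta') = (c_1(\theta'),\dots,c_O(\theta'),\theta'_{O+1},\dots,\theta'_D)$, whereas you use $\Psi(\theta') = \big(P(\theta'-\theta),\,\vec{c}(\theta')-\vec{c}(\theta)\big)$ with $P$ the orthogonal projector onto $\ker J$. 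Your choice is the better one for the statement as written: with the paper's chart, $\mathcal{N}_\theta$ is transversal to $\mathcal{M}_\theta$ but its tangent space at $\theta$ is the span of the preimages of the first $O$ standard basis vectors under $D\alpha(\theta)$, which is not in general $(\ker J)^\perp = \operatorname{im} J^\top$; the orthogonality of $T_\theta\mathcal{M}_\theta$ and $T_\theta\mathcal{N}_\theta$ is asserted in the theorem but not actually established by the paper's construction. Your projector-based chart makes $T_\theta\mathcal{N}_\theta = \ker P = (\ker J)^\perp$ immediate, so the orthogonality claim falls out for free. (Your chart also manifestly contains $\theta$ in both slices, since $\Psi(\theta)=(0,0)$, whereas the paper's $\mathcal{N}_\theta$ as literally written passes through $\theta$ only if $\theta_{O+1}=\dots=\theta_D=0$.) What the paper's version buys in exchange is only that it avoids introducing the projector and works directly in standard coordinates.
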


\begin{proof}
    The existence of these manifolds follow from the preimage theorem. It can be proved as follows:
     The constraint differential $\vec{J_c}(\theta) \in\mathbb{R}^{O\times D}$ is assumed to be full rank. Each column of the $\vec{J_c}$ corresponds to the gradient of each constraint
     \begin{align}
         \vec{J_c}(\theta) = \left(\frac{\partial c_i}{\partial \theta_j} \right)_{i=1,\ldots,O; j=1,\ldots,D}
     \end{align}   
     We assume that the differential operator is full-rank, hence surjective, and $D>O$. Consequently, we can reorder the matrix columns to ensure that the first $O$ columns are linearly independent. Then the $O\times O$ matrix below (with reordered columns):
     \begin{align}
         R = \left(\frac{\partial c_i}{\partial \theta_j} \right)_{i=1,\ldots,O; j=1,\ldots,O}
     \end{align}
    is invertible. Consider the map 
    \begin{align}
        \alpha\left(\theta_1, \ldots,\theta_D\right) = \left(\frac{\partial c_1}{\partial \theta}, \ldots, \frac{\partial c_O}{\partial \theta}, \theta_{O+1}, \ldots, \theta_D\right)
    \end{align}
    Then we obtain that the Jacobian of $\alpha$, which is:
    \begin{align}
        \vec{J}_\alpha(\theta) = 
    \begin{pmatrix}
        R & * \\
        0 & \mathbb{I}
    \end{pmatrix}.
    \end{align}
    This matrix is invertible. Hence, by the inverse function theorem, $\alpha$ is a local diffeomorphism.

    Finally, define 
    \begin{equation}
        \mathcal{M}_{\theta} = \left\{ \alpha^{-1}(\underbrace{\vec{c}_1(\theta),\ldots,\vec{c}_{O}(\theta)}_{O\textrm{ constraint values}}, p_{1}, \ldots, p_{D-O}) \quad \textnormal{ for } p\in\mathbb{R}^{D-O} \right\}\subseteq\mathbb{R}^D,
    \end{equation}
    and similarly 
    \begin{equation}
        \mathcal{N}_{\theta} = \left\{ \alpha^{-1}(p_{1}, \ldots, p_{O}, \underbrace{0,\ldots,0}_{D-O}) \quad \textnormal{ for } p\in\mathbb{R}^{O} \right\}\subseteq\mathbb{R}^D.
    \end{equation}

These the two restrictions of $\alpha$ are slice charts of $\mathcal{M}_{\theta}$ and $\mathcal{N}_{\theta}$, respectively, proving that they are embedded manifolds in $\mathbb{R}^D$. 
\end{proof}

We can endow these two manifolds with Riemannian metrics. The kernel manifold $\mathcal{M}_{\theta}$ inherits the Euclidean metric, i.e., its metric $\vec{g}^{\perp}$ restricted to the tangent space $T_{\theta}\mathcal{M}_{\theta}$. The image manifold $\mathcal{N}_{\theta}$ can be endowed with a metric $\vec{g}$ derived by pulling back the Euclidean metric from the constraint output space, such that distances correspond to changes in the constraint value. Hence we get $g = \vec{J_{c}^{\top}J_{c}}$, restricted to the tangent space $T_{\theta}\mathcal{N}_{\theta}$. Also note that even though $\vec{J_{c}^{\top}J_{c}}$ is a low-rank matrix, $g$ is not a pseudo-metric but a proper Riemannian metric because the tangent space of the image manifold excludes directions that live in the null space of the Jacobian and hence of the metric.

We can now interpret the update in \Cref{eq:null_space_update} (replicated below as \Cref{eq:null_space_update_geom}) as approximating a Riemannian gradient descent step across these two manifolds. We minimize the loss $\mathcal{L}$ on $\mathcal{M}_{\theta}$ and the squared constraint norm $||\vec{c}(\theta)||^2$ on $\mathcal{N}_{\theta}$. If we approximate the retractions with the orthogonal projection onto the tangent space, as is standard in the literature \citep{boumal2023introduction}, then the Riemannian gradient descent steps on these two manifolds are given by:
\begin{subequations}
\begin{align} \label{eq:null_space_update_geom}
\theta_{t+1} - \theta_{t} &= -\eta\mathcal{R}_{T_{\theta_{t}}\mathcal{M}_{\theta_t}}(\vec{g}^{\perp})^{-1}\nabla\mathcal{L}(\theta_{t}) - \gamma\mathcal{R}_{T_{\theta_{t}}\mathcal{N}_{\theta_t}}(- (\vec{g})^{-1}\nabla||c(\theta_t)||^{2})\\
 \label{eq:riemannian_interpretation}
&\approx -\eta\left( \mathrm{proj}_{T_{\theta_{t}}\mathcal{M}_{\theta_t}}((\vec{g}^{\perp})^{-1}\nabla\mathcal{L}(\theta_{t}))\right) - \gamma \left( \mathrm{proj}_{T_{\theta_{t}}\mathcal{N}_{\theta_t}}(- (\vec{g})^{-1}\nabla||c(\theta_t)||^{2})\right) \\
&=  -\eta\left(\vec{(\mathbb{I} - J_{c}^{\top}(J_{c}J_{c}^{\top})^{-1}J_{c})\nabla\mathcal{L}(\theta_{t})}\right) -\gamma \left(\vec{J_{c}^{\top}(J_{c}J_{c}^{\top})^{-1}c}(\theta_{t})\right)
\end{align}
\end{subequations}
This is exactly the null-space update. We can see that the null space method approximates Riemannian gradient descent concurrently on these two manifolds.

\section{Details on experiments in \Cref{sec:demonstrations}}
\label{sec:experimental-details}

This section provides detailed information regarding the experimental setups for the results presented in the main paper. Our implementation is developed in JAX \citep{jax2018github}, using Optax \citep{deepmind2020jax} for optimization. The code for all experiments is available at \url{https://github.com/h-roy/code-projected-constraints}. The table below provides concrete examples of constrained optimization in deep learning with relevant references 

\begin{table}[ht]
  \caption{Examples for imposing structure via constrained optimization.
}
  \label{table-structure-learning-examples}
\begin{center}
    \begin{tabular}{p{0.21\textwidth} p{0.27\textwidth} p{0.39\textwidth}}
    \toprule
    Structure 
        & Constraint $\vec{c}_\psi(\theta)$ 
        & Key references 
        \\
    \midrule 
    $G$-Equivariance 
        & $T_g(f(\vec{x}; \vec{\theta})) - f(T'_g(\vec{x}); \vec{\theta})$ 
        & \citet{cohen2016group, finzi2021practical}
        \\
    $G$-Invariance 
        & $f(T_g(\vec{x}); \vec{\theta}) - f(\vec{x}; \vec{\theta})$ & \citet{puny2021frame}
    \\ 
    $\ell_0$-Sparsity 
        & $\frac{\| \theta\|_0}{D} - s_{\textrm{target}}$  
        & \citet{louizos2017learning, gallego2022controlled} 
        \\ 
    Adversarial robustness 
        & $\mathbb{E}_{\mathbf{x}_{\textrm{clean}}, y}[l(f_\theta({x}_{\textrm{clean}}), y)] \leq \delta$ 
        & \citet{robey2021adversarial} 
        \\ 
    Conservativeness 
            & $\mathbb{E}_{\vec{p(x)}}[\|\frac{\partial s}{\partial x} - \frac{\partial s}{\partial x}^T\|_F^2]$ 
        & \citet{chao2022investigating} 
        \\ 
    \bottomrule
  \end{tabular}
  \end{center}
\end{table}

Unless otherwise specified, all deep-learning experiments were conducted on an NVIDIA H100 GPU, and the others on the CPU of a consumer-level laptop. For iterative solvers like LSMR, a tolerance of $10^{-6}$ was used by default. For results reporting mean and standard deviation, experiments were repeated using 3 different random seeds, covering aspects like network initialization and data shuffling.

\begin{table}[t]
\centering
\caption{Hyperparameter configurations for constrained optimization experiments.}
\label{tab:constrained_opt_hyperparams_all}
\
\resizebox{\textwidth}{!}{
\begin{tabular}{@{}llcccc@{}}
\toprule
\textbf{Experiment} & \textbf{Model} & \textbf{Optimizer} & \textbf{Weight $\gamma$} & \textbf{Batch Size} & \textbf{Epochs} \\
\midrule
$C_4$ Equivariance & LeNet (FMNIST) & Adam &  $10^{-4}$ & $128$ & $100$ \\
\midrule
$O(3)$ Equivariance & MLP (Particles) & Adam & $0.5$ & $128$ & $100$ \\
\midrule
\multicolumn{2}{@{}l}{$l_0$ Sparsity} \\
& ResNet-18 (CIFAR) & Adam & $10^{-4}$ & $128$ & $300$ \\
& ResNet-18 (SVHN) & Adam & $10^{-4}$ & $128$ & $300$ \\
& Pre-trained SWIN-S (ImageNet) & SGD & $0.01$ & $128$ & $10$ \\
\midrule
Conservativeness & MLP (Synthetic 2D) & Adam & $500$ & $5000$ & $10^5$ \\
\bottomrule
\end{tabular}}
\end{table}
\label{subsec:constrained_learning_details}
\Cref{sec: constrained opt} applies the null-space method to various constrained optimization problems from the literature. 
Below, we detail how we implement the constraints for each experiment and report any relevant hyperparameters in \Cref{tab:constrained_opt_hyperparams_all}. The only hyperparameter specific to our method is the constraint weight $\gamma$. This weight refers to the constant multiplier of the constraint term in the null-space update. While the convergence is robust to the choice of $\gamma$, this weight $\gamma$ affects the rate of convergence of the constraint.

\subsection{Equivariance}
\label{subsubsec:equiv_invar_appendix}

\paragraph{$C_4$ Rotational equivariance on FMNIST}
We enforced $C_4$ rotational equivariance on a LeNet model trained on FMNIST.  The constraint function is given by $\vec{c}(\theta) = f(Rx;\theta)-Rf(x;\theta)$, for all $x$ and $R\in C_4$, where the output of $f$ is the final output of the convolutional layers of the neural network. Concretely, the constraint measures the norm of the difference between the filters of rotated images and rotated filters of an image, for all the images in a mini-batch and all the rotations in $C_4$. Hence, the constraint output dimension is $B\times 4$, which demonstrates the ability to handle multiple constraints with ease.

The baseline model was trained using Adam with a learning rate of $10^{-3}$ for $100$ epochs and a batch size of $128$. For the null-space method, the same optimizer was chained with our null-space projection using $\gamma = 10^{-4}$. The constraint violation metric was the mean squared norm $||f(Rx;\theta)-Rf(x;\theta)||^2$ averaged over the test set and all four $C_4$ rotations and test data.

\paragraph{$O(3)$ Equivariance for particle systems}
This experiment aims to predict the moment of inertia for particle systems, following the task setup by \citet{finzi2021practical}. Following \citet{finzi2021practical}, an MLP with three hidden layers of 384 units each and ReLU activations was trained on a synthetic dataset with $5000$ data points, each representing a system of particles and their targets corresponding to their respective moments of inertia. Unlike $C_4$, $O(3)$ is not a discrete group. Hence, it is not possible to sample all the group elements. So to enforce the constraints, we sample random matrices from $O(3)$ and the equivariance constraint is given by $f(Rx;\theta)-R^{\top}f(x;\theta)R=0$  \citep{finzi2021practical}. We average over each mini-batch and end up with a measurement of constraint-violation, which is a $3\times3$ inertia tensor. Hence, we have a nine-dimensional constraint.

The baseline uses $O(3)$ data augmentation, where each input particle system was augmented with random $O(3)$ rotations, with corresponding transformations applied to the target tensor. Both the null-space method and baseline uses an Adam optimizer with a learning rate $10^{-3}$ with null space projections with a batch size of 128. Constraint violation was measured as $||f(Rx;\theta)-R^{\top}f(x;\theta)R||_F^2$, averaged over the test set and random $O(3)$ transformations.

\subsubsection{Enforcing $\ell_0$ sparsity}
\label{subsubsec:l0_sparsity_appendix}
To enforce $\ell_0$ sparsity, we used learnable stochastic masks $m_i \sim \text{Bernoulli}(p_i)$ for weights $\theta_i = \tilde{\theta}_i m_i$, with the straight-through estimator for gradients of mask probabilities $p$ \citep{yin2019understanding}. The constraint $c(p)=\frac{1}{N_{p}}\sum_{i=1}^{N_p} p_{i}-s_\text{target}=0$ was applied to the expected proportion of active weights. This is a scalar constraint.

\paragraph{ResNet-18 on CIFAR-10 and SVHN}
A standard ResNet-18 architecture is trained on CIFAR-10 and SVHN. Sparsity was targeted at $s_\text{target}=0.1$ (90\% sparsity) and applied to [e.g., all convolutional and fully connected layer weights, excluding biases and batch normalization parameters]. The baseline was one-shot magnitude pruning, where the model was trained to convergence, then pruned, and fine-tuned for 20 epochs with a learning rate of $10^{-4}$. For the null-space method, model weights $\theta$ and mask probabilities $p$ were optimized using Adam with learning rates $10^{-3}$ for $300$ epochs with batch size $128$, with $50$ epochs of warm-up, i.e, standard training without any projections. Standard data augmentation for CIFAR-10/SVHN was used (random crops and horizontal flips).

\paragraph{SWIN-S on ImageNet}
A pretrained SWIN-S (Small) transformer was trained on ImageNet with a target sparsity $s_{target}=0.5$ (50\% sparsity). Both methods use an SGD optimizer with a linear one-cycle learning rate schedule and a peak learning rate of $0.1$. The null space method was trained for $10$ epochs, with a batch size of $128$ and standard ImageNet augmentations.

\subsubsection{Conservativeness of score-based generative models}
\label{subsubsec:conservative_sgm_appendix}
For score-based generative models, we enforced the conservativeness constraint $c(\theta)=||J(s)-J(s)^{\top}||_{F}^{2}=0$, where $s(x;\theta)$ is the score network and $J(s)$ its Jacobian with respect to $x$. The score network $s(x;\theta)$ was an MLP with Swish activations. The Jacobian $J(s)$ was computed using JAX's automatic differentiation tools per sample, and the constraint was averaged over mini-batches. In this experiment, we attempt to reproduce the setup of \citet{chao2022investigating} exactly. For additional details on learning rate, batch size, evaluation metrics, and more, refer to \citet{chao2022investigating}.

\end{document}